\newcommand{\ind}{\mathds{1}}
\newcommand{\R}{\mathds{R}}
\newcommand{\cD}{\mathcal{D}}
\newcommand{\cmark}{\checkmark}%
\newcommand{\xmark}{$\times$}%
\newtheorem{theorem}{Theorem}
\begin{document}

% If your paper is accepted and the title of your paper is very long,
% the style will print as headings an error message. Use the following
% command to supply a shorter title of your paper so that it can be
% used as headings.
%
%\runningtitle{I use this title instead because the last one was very long}

% If your paper is accepted and the number of authors is large, the
% style will print as headings an error message. Use the following
% command to supply a shorter version of the authors names so that
% they can be used as headings (for example, use only the surnames)
%
%\runningauthor{Surname 1, Surname 2, Surname 3, ...., Surname n}

\twocolumn[

\aistatstitle{Membership Inference Attacks against Synthetic Data through Overfitting Detection}

\aistatsauthor{ Boris van Breugel \And Hao Sun \And Zhaozhi Qian \And  Mihaela van der Schaar }

\aistatsaddress{ University of Cambridge \And University of Cambridge \And University of Cambridge \And University of Cambridge \\Alan Turing Institute } ]

\begin{abstract}
Data is the foundation of most science. Unfortunately, sharing data can be obstructed by the risk of violating data privacy, impeding research in fields like healthcare. Synthetic data is a potential solution. It aims to generate data that has the same distribution as the original data, but that does not disclose information about individuals. Membership Inference Attacks (MIAs) are a common privacy attack, in which the attacker attempts to determine whether a particular real sample was used for training of the model. Previous works that propose MIAs against generative models either display low performance---giving the false impression that data is highly private---or need to assume access to internal generative model parameters---a relatively low-risk scenario, as the data publisher often only releases synthetic data, not the model. In this work we argue for a realistic MIA setting that assumes the attacker has some knowledge of the underlying data distribution. We propose DOMIAS, a density-based MIA model that aims to infer membership by targeting local overfitting of the generative model. Experimentally we show that DOMIAS is significantly more successful at MIA than previous work, especially at attacking uncommon samples. The latter is disconcerting since these samples may correspond to underrepresented groups. We also demonstrate how DOMIAS' MIA performance score provides an interpretable metric for privacy, giving data publishers a new tool for achieving the desired privacy-utility trade-off in their synthetic data.
\end{abstract}

\section{INTRODUCTION} %\vspace{-2mm}
Real data may be privacy-sensitive, prohibiting open sharing of data and in turn hindering new scientific research, reproducibility, and the development of machine learning itself. Recent advances in generative modelling provide a promising solution, by replacing the \textit{real} dataset with a \textit{synthetic} dataset---which retains most of the distributional information, but does not violate privacy requirements. 

%\vspace{-0.24cm}
\textbf{Motivation} The motivation behind synthetic data is that data is generated \emph{from scratch}, such that no synthetic sample can be linked back to any single real sample. However, how do we verify that samples indeed cannot be traced back to a single individual? Some generative methods have been shown to memorise samples during the training procedure, which means the synthetic data samples---which are thought to be genuine---may actually reveal highly private information \citep{Carlini2018TheNetworks}. To mitigate this, we require good metrics for evaluating privacy, and this is currently one of the major challenges in synthetic data \citep{Jordon2021Hide-and-SeekRe-identification, Alaa2022HowModels}. Differential privacy (DP) \citep{Dwork2014ThePrivacy} is a popular privacy definition and used in several generative modelling works \citep{Ho2021DP-GAN:Nets,Torkzadehmahani2020DP-CGAN:Generation,Chen2020GS-WGAN:Generators,Jordon2019PATE-GAN:Guarantees,Long2019G-PATE:Discriminators,Wang2021DataLens:Aggregation,Cao2021DontDivergence}. However, even though DP is theoretically sound, its guarantees are difficult to interpret and many works \citep{Rahman2018MembershipModel,Jayaraman2019EvaluatingPractice,Jordon2019PATE-GAN:Guarantees,Ho2021DP-GAN:Nets} reveal that for many settings, either the theoretical privacy constraint becomes meaningless ($\epsilon$ becomes too big), or utility is severely impacted. This has motivated more lenient privacy definitions for synthetic data, e.g. see \citep{Yoon2020Anonymizationads-gan}. We take an adversarial approach by developing a privacy attacker model---usable as synthetic data evaluation metric that quantifies the practical privacy risk. 

%\vspace{-0.24cm}
\textbf{Aim} Developing and understanding privacy attacks against generative models are essential steps in creating better private synthetic data. There exist different privacy attacks in machine learning literature---see e.g. \citep{Rigaki2020ALearning}---but in this work we focus on Membership Inference Attacks (MIAs) \citep{Shokri2017MembershipModels}. The general idea is that the attacker aims to determine whether a particular sample they possess was used for training the machine learning model. Successful MIA poses a privacy breach, since mere membership to a dataset can be highly informative. For example, an insurance company may possess a local hospital's synthetic cancer dataset, and be interested to know whether some applicant was used for generating this dataset---disclosing that this person likely has cancer \citep{Hu2022MembershipSurvey}. Additionally, MIAs  can be a first step towards other privacy breaches, like profiling or property inference \citep{DeCristofaro2021ALearning}. 

Previous work in MIA attacks against generative models is inadequate, conveying a false pretense of privacy. In the NeurIPS 2020 Synthetic Data competition \citep{Jordon2021Hide-and-SeekRe-identification}, none of the attackers were successful at MIA.\footnote{Specifically, none performed better than random guessing in at least half of the datasets.} Similar negative results were found in the black-box results of \citep{Liu2019PerformingModels,Hayes2019LOGAN:Models,Hilprecht2019MonteModels, Chen2019GAN-Leaks:Models}, where additional assumptions were explored to create more successful MIAs. Most of these assumptions (see Sec. \ref{sec:related}) rely on some access to the generator, which we deem relatively risk-less since direct access is often avoidable in practice. Nonetheless, we show that even in the black-box setting---in which we only have access to the synthetic data---MIA can be significantly more successful than appears in previous work, when we assume the attacker has some independent data from the underlying distribution. In Sec. \ref{sec:MIA_formalism} we elaborate further on why this is a realistic assumption. Notably, it also allows an attacker to perform significantly better attacks against underrepresented groups in the population (Sec. \ref{sec:underrepresented}).

%\vspace{-0.24cm}
\textbf{Contributions} This paper's main contributions are the following. %\vspace{-2mm}
\begin{enumerate}
    \item We propose DOMIAS: a membership inference attacker model against synthetic data, that incorporates density estimation to detect generative model overfitting. DOMIAS improves upon prior MIA work by i) leveraging access to an independent reference dataset and ii) incorporating recent advances in deep density estimation. %\vspace{-2mm}
    \item We compare the MIA vulnerability of a range of generative models, showcasing how DOMIAS can be used as a metric that enables generative model design choices %\vspace{-2mm}
    \item We find that DOMIAS is more successful than previous MIA works at attacking underrepresented groups in synthetic data. This is disconcerting and strongly motivates further research into the privacy protection of these groups when generating synthetic data. %\vspace{-2mm}
    %\item (Optional) we provide a first approach to protecting against our attack
\end{enumerate}

%The paper is structured as follows. Sec. \ref{sec:MIA_formalism} formalises Membership Inference. Sec. \ref{sec:method} reconsiders the key assumption used in previous work and introduces a fix: DOMIAS. In Sec. \ref{sec:related} we explore previous work in the area, including the assumptions they make. Sec. \ref{sec:experiments} provides extensive experiments on when and why DOMIAS outperforms baselines, and showcasing DOMIAS as a privacy metric for comparing synthetic data. Sec. \ref{sec:discussion} discusses future directions.

\section{MEMBERSHIP INFERENCE: FORMALISM AND ASSUMPTIONS} \label{sec:MIA_formalism}%\vspace{-2mm}
\textbf{Formalism for synthetic data MIA}
Membership inference aims to determine whether a given sample comes from the training data of some model \citep{Shokri2017MembershipModels}. Let us formalise this for the generative setting. Let random variable $X$ be defined on $\mathcal{X}$, with distribution $p_R(X)$. Let $\cD_{mem}\overset{iid}{\sim} p_R(X)$ be a training set of independently sampled points from distribution $p_R(X)$. Now let $G:\mathcal{Z}\rightarrow \mathcal{X}$ be a generator that generates data given some random (e.g. Gaussian) noise $Z$. Generator $G$ is trained on $\cD_{mem}$, and is subsequently used to generate synthetic dataset $\cD_{syn}$. Finally, let $A:\mathcal{X}\rightarrow [0,1]$ be the attacker model, that possesses the synthetic dataset $\cD_{syn}$, some test point $x^*$, with $X^*\sim p_R(X)$, and possibly other knowledge---see below. Attacker $A$ aims to determine whether some $x^*\sim p_R(X)$ they possess, belonged to $\cD_{mem}$, hence the perfect attacker outputs $A(x^*)=\ind[x^*\in\cD_{mem}]$. The MIA performance of an attacker can be measured using any classification metric. 

%\vspace{-0.24cm}
\textbf{Assumptions on attacker access} The strictest black-box MI setting assumes the attacker only has access to the synthetic dataset $\cD_{syn}$ and test point $x^*$. In this work we assume access to a real data set that is independently sampled from $p_R(X)$, which we will call the reference dataset and denote by $\cD_{ref}$. The main motivation of this assumption is that an attacker needs some understanding of what real data looks like to infer MI---in Sec. \ref{sec:method} we will elaborate further on this assumption's benefits. Similar assumptions have been made in the supervised learning MI literature, see e.g. \citep{Shokri2017MembershipModels, Ye2021EnhancedModels}.
This is a realistic scenario to consider for data publishers: though they can control the sharing of their own data, they cannot control whether attackers acquires similar data from the general population. A cautious data publisher would assume the attacker has access to a sufficiently large $\cD_{ref}$ to approximate $p_R(X)$ accurately, since this informally bounds the MIA risk from above. Related MI works \citep{Liu2019PerformingModels,Hayes2019LOGAN:Models,Hilprecht2019MonteModels, Chen2019GAN-Leaks:Models} consider other assumptions that all require access to the synthetic data's generative model.\footnote{Though with varying extents, see \citep{Chen2019GAN-Leaks:Models}} These settings are much less dangerous to the data publisher, since these can be avoided by only publishing the synthetic data. Individual assumptions of related works are discussed further in Sec. \ref{sec:related}.

%\textcolor{red}{Include figure for visualising what attacker has access to}

\begin{figure*}[t]
%\captionsetup{font=small}
    \centering
    \begin{subfigure}[b]{0.48\textwidth}
    \centering
    \includegraphics[width=0.9\textwidth]{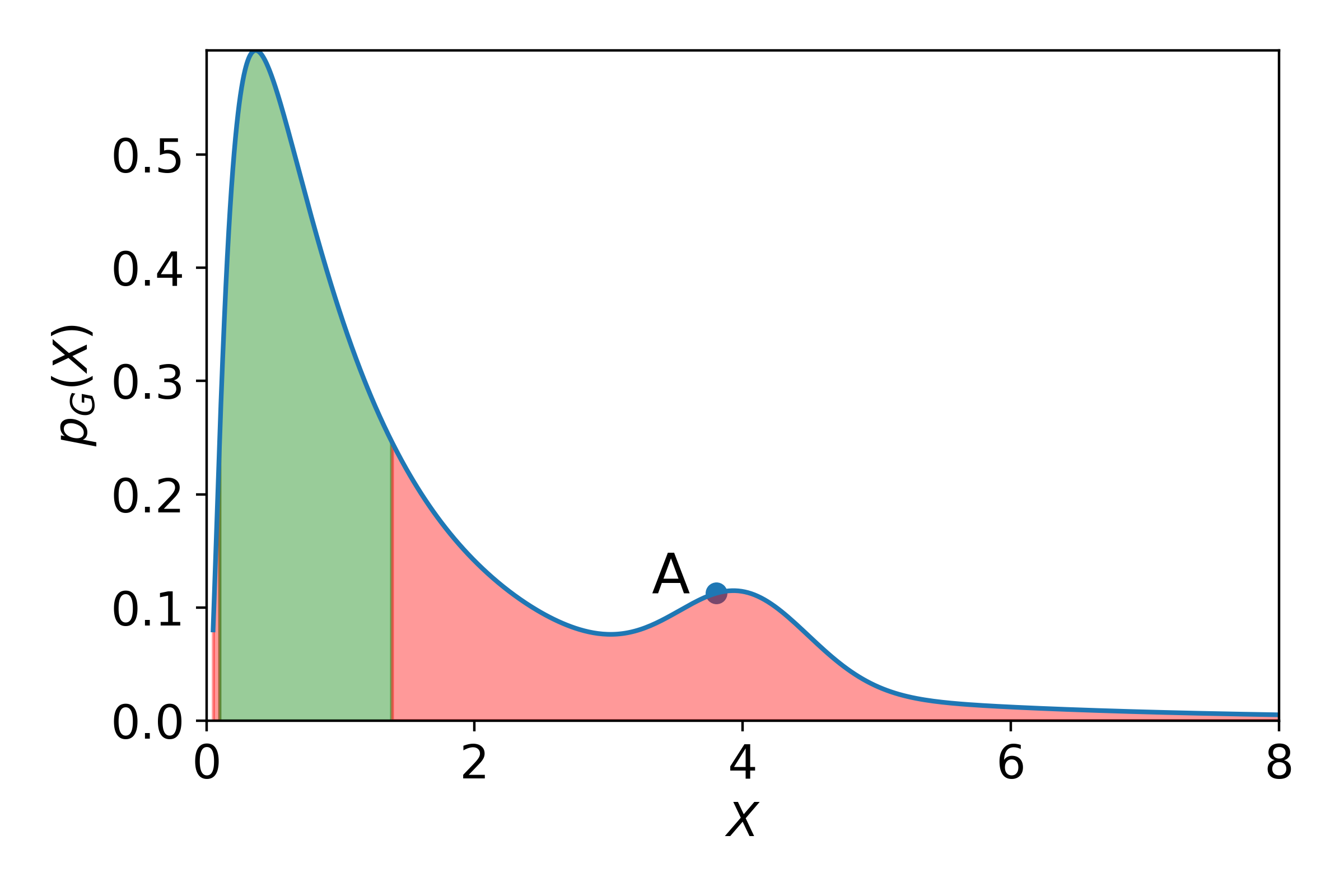}
    %\vspace{-0.3cm}
    \caption{Generative distribution in original space}
    \end{subfigure}
    \hfill
    \begin{subfigure}[b]{0.48\textwidth}
    \centering
    \includegraphics[width=0.9\textwidth]{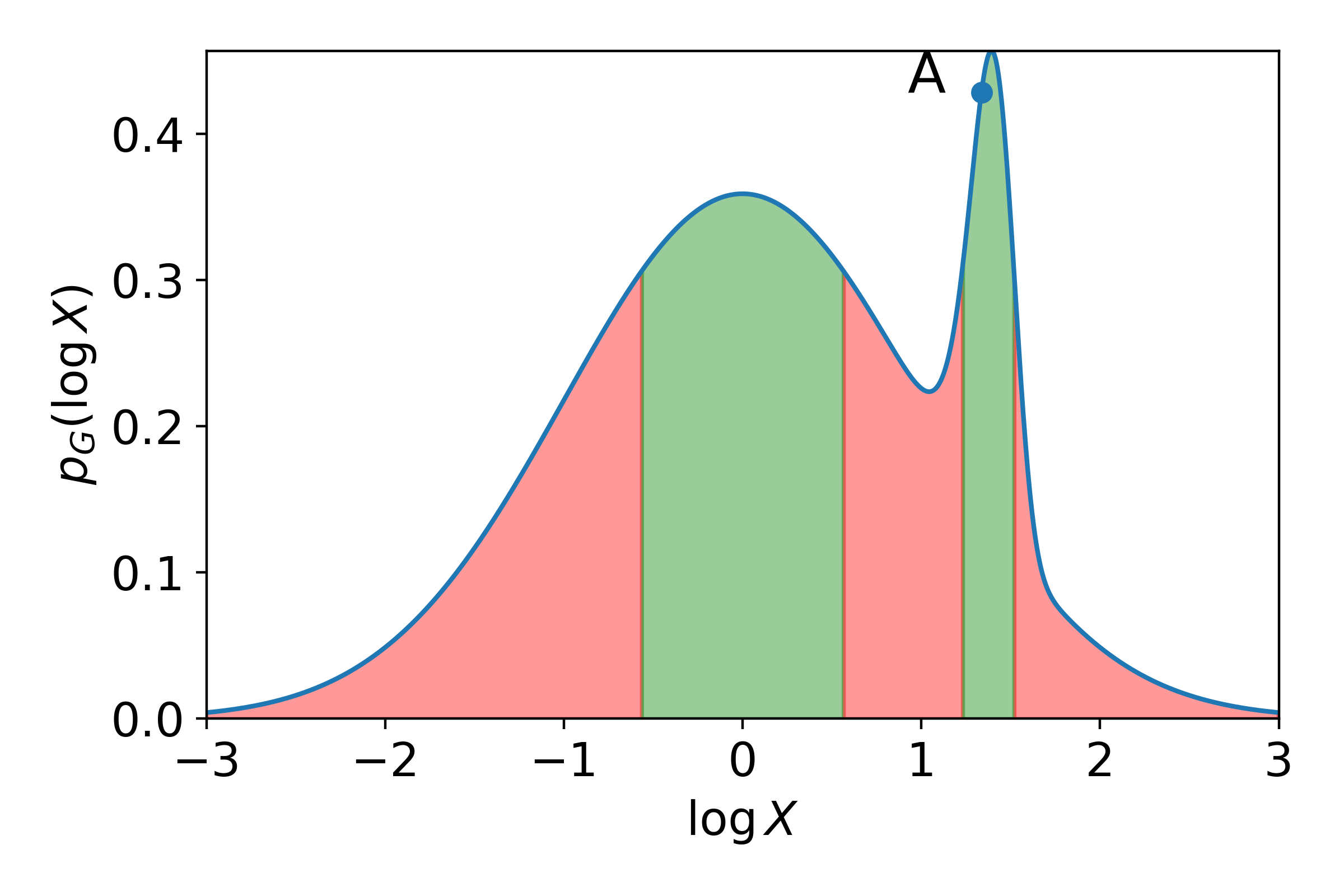}
    %\vspace{-0.3cm}
    \caption{Distribution in log-transformed space}
    \end{subfigure}
    \caption{Should we infer membership $m=1$ for point $A$? Consider the generative distribution for two representations of $X$, optimal methods based on Eq. \ref{eq:assumption_prev} will infer $m=1$ for green and $m=0$ for red areas. This is problematic; it implies inference of these methods is dependent on the (possibly arbitrary) representation of variable $X$. \emph{Conclusion: it does not make sense to focus on mere density, MIA needs to target local overfitting directly}. This requires data from (or assumptions on) the underlying distribution.}
    \label{fig:toy_example_eq1}
%\vspace{-3mm}
\end{figure*}

\section{DOMIAS} \label{sec:method}%\vspace{-2mm}
\subsection{Rethinking the black-box setting: why $\cD_{syn}$ alone is insufficient} 
The most popular black-box setting assumes only access to $\cD_{syn}$. This gives little information, which is why previous black-box works \citep{Hayes2019LOGAN:Models, Hilprecht2019MonteModels, Chen2019GAN-Leaks:Models} implicitly assume: 
\begin{equation}
\label{eq:assumption_prev}
    A_{prev}(x^*) = f(p_G(x^*)),
\end{equation}
where $A$ indicates the attacker's MIA scoring function, $p_G(\cdot)$ indicates the generator's output distribution and $f:\R\rightarrow [0,1]$ is some monotonically increasing function.
There are two reasons why Eq. \ref{eq:assumption_prev} is insufficient. First, the score does not account for the intrinsic distribution of the data. Consider the toy example in Figure \ref{fig:toy_example_eq2}a. There is a local density peak at $x=4$, but without further knowledge we cannot determine whether this corresponds to an overfitted example or a genuine peak in the real distribution. \textbf{It is thus naive to think we can do MI without background knowledge}. 

Second, the RHS of Eq. \ref{eq:assumption_prev} is not invariant w.r.t. bijective transformations of the domain. Consider the left and right plot in Figure \ref{fig:toy_example_eq1}. Given the original representation, we would infer $M=0$ for any point around $x=4$, whereas in the right plot we would infer $M=1$ for the same points. This dependence on the representation is highly undesirable, as any invertible transformation of the representation should contain the same information. 

How do we fix this? We create the following two desiderata: i) the MI score should target overfitting \textit{w.r.t. the real distribution}, and ii) it should be independent of representation.

% desiderata first, then propose method below. I.e. present thought process in reverse
% Contributions (theoretically, practically, empirically. Make it three)
% Results (1. Show it's better. 2. Show why it's better. 3. Tune parameters, when it works. 4. insights)
% 

\begin{figure*}[t]
%\captionsetup{font=small}
    \centering
    \begin{subfigure}{0.48\textwidth}
    \centering
    \includegraphics[width=0.9\textwidth]{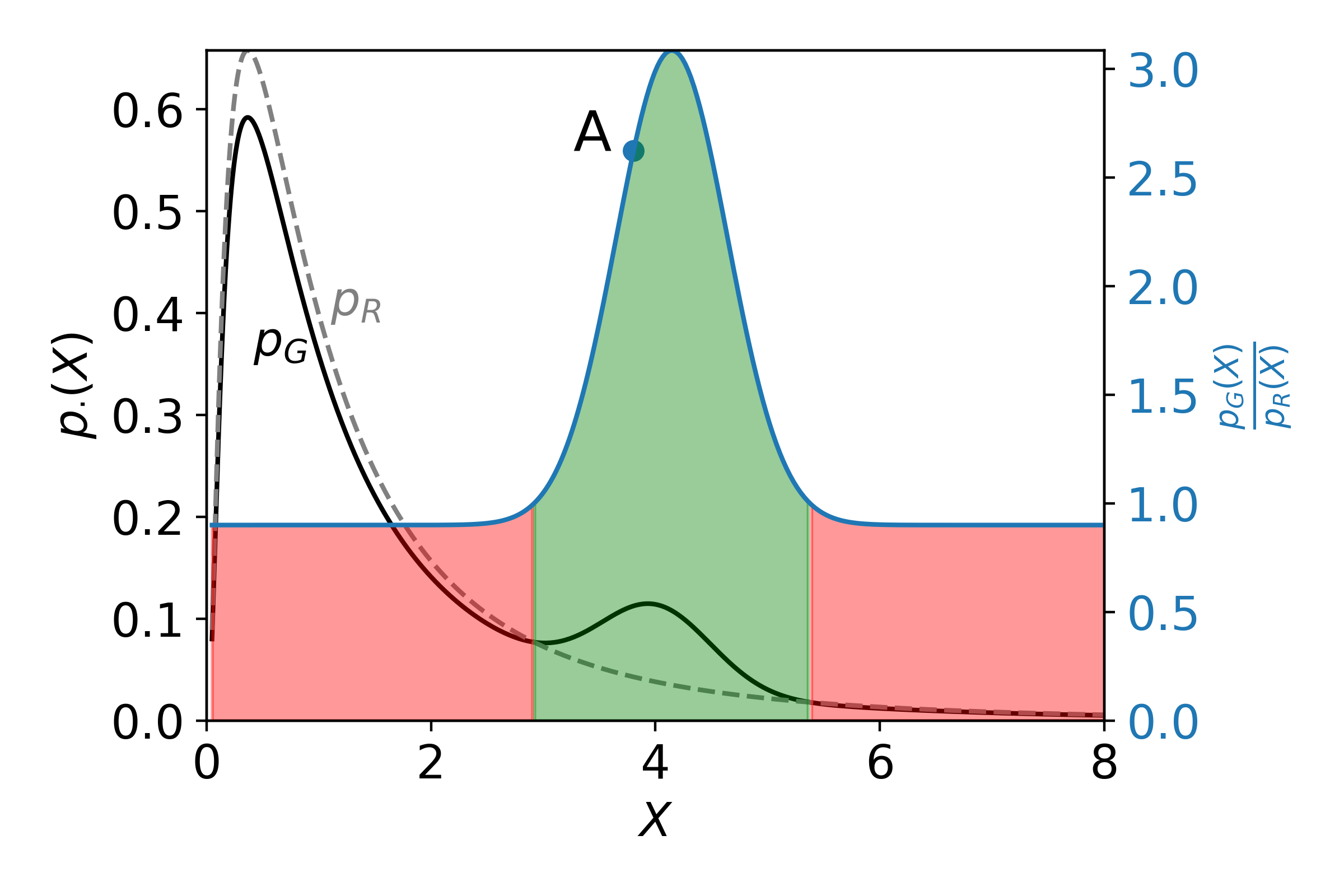}
    
    %\vspace{-0.3cm}
    \caption{Original space}
    \end{subfigure}
    \hfill
    \begin{subfigure}{0.48\textwidth}
    \centering
    \includegraphics[width=0.9\textwidth]{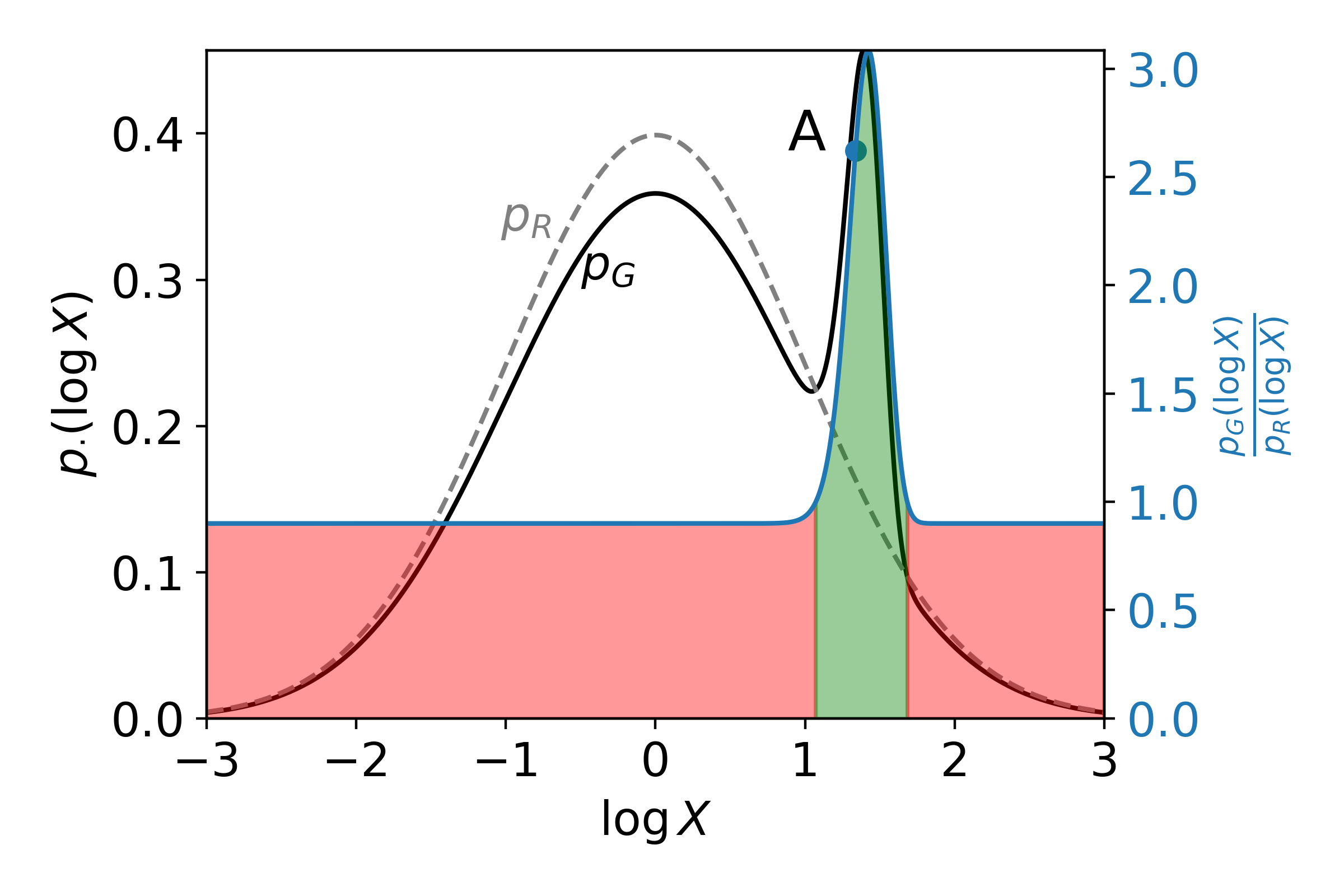}
    
    %\vspace{-0.3cm}
    \caption{Log-transformed space}
    \end{subfigure}
    \caption{\emph{DOMIAS scores are not dependent on the feature representation.} This is the same toy example as in Figure \ref{fig:toy_example_eq1}, where we now assume the bump at $x=4$ has been caused by overfitting in the generator, s.t. this part of the space has become overrepresented w.r.t. the original distribution. DOMIAS infers MI by weighting the generative and real distribution, inferring $m=1$ ($m=0$) for green (red) areas. Note the difference with Figure \ref{fig:toy_example_eq1}: whereas MI predictions of previous works that use Eq. \ref{eq:assumption_domias} are dependent on the representation, DOMIAS scores are the same in both domains (Theorem \ref{theorem:representation}).}
    \label{fig:toy_example_eq2}
%\vspace{-3mm}
\end{figure*}

\subsection{DOMIAS: adding knowledge of the real data.} %\vspace{-2mm}
We need to target overfitting directly. We propose the DOMIAS framework: Detecting Overfitting for Membership Inference Attacks against Synthetic Data. 

Let us assume we know the true data distribution $p_R(X)$. We change Eq. \ref{eq:assumption_prev} to:
\begin{equation}
\label{eq:assumption_domias}
    A_{\mathrm{DOMIAS}}(x^*) = f(\frac{p_G(x^*)}{p_R(x^*)}),
\end{equation}
that is, we weight Eq. \ref{eq:assumption_prev} by the real data distribution $p_R(X)$.\footnote{This work focuses on relative scores, hence we ignore choosing $f$---see Sec. \ref{sec:discussion}.}  Figure \ref{fig:toy_example_eq2} shows the difference between DOMIAS and previous work using Eq. \ref{eq:assumption_prev}, by considering the same toy example as in Figure \ref{fig:toy_example_eq1}. Effectively, Eq. \ref{eq:assumption_domias} distinguishes between the real and generative distribution, similar in vain to global two-sample tests (e.g. see \cite{Gretton2012ATest,Arora2019ALearning, Gulrajani2019TowardsGeneralization}). The probability ratio has the advantage that (cf. e.g. probability difference) it is independent of the specific representation of the data:
\begin{theorem} \label{theorem:representation}
Let $X_G$ and $X_R$ be two random variables defined on $\mathcal{X}$, with distributions $p_G(X)$ and $p_R(X)$, s.t. $p_G\ll p_R$, i.e. $p_R$ dominates $p_G$. Let $g:\mathcal{X}\rightarrow \tilde{\mathcal{X}}, x\mapsto g(x)$ be some invertible function, and define representations $\tilde{X}_G = g(X_G)$ and $\tilde{X}_R=g(X_R)$ with respective distribution $\tilde{p}_G(\tilde{X})$ and $\tilde{p}_R(\tilde{X})$. Then $\frac{p_G(X)}{p_R(X)} = \frac{\tilde{p}_G(g(X))}{\tilde{p}_R(g(X))}$, i.e. the same score is obtained for either data representations.
\end{theorem}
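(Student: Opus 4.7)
The plan is to invoke the standard change-of-variables formula for densities and observe that the Jacobian factors appearing in the numerator and denominator are identical, hence cancel. Throughout I will assume $g$ is a diffeomorphism (or at least absolutely continuous with a well-defined Jacobian almost everywhere on the support of $p_R$), which is the implicit setting when talking about densities of transformed variables; the dominance condition $p_G \ll p_R$ guarantees the ratio is well defined $p_R$-a.e.\ and, after pushing forward, $\tilde p_R$-a.e.

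First, I would write out the change-of-variables formula for a single random variable: for $\tilde X = g(X)$ with $g$ invertible on $\mathcal{X}$, the density of $\tilde X$ at a point $\tilde x \in \tilde{\mathcal{X}}$ is
\begin{equation*}
\tilde p(\tilde x) \;=\; p\bigl(g^{-1}(\tilde x)\bigr)\,\bigl|\det J_{g^{-1}}(\tilde x)\bigr|,
\end{equation*}
where $J_{g^{-1}}$ denotes the Jacobian of $g^{-1}$. Applying this identity separately to $X_G$ and $X_R$, and evaluating both at the specific point $\tilde x = g(x)$ (so that $g^{-1}(\tilde x) = x$), gives
\begin{equation*}
\tilde p_G(g(x)) = p_G(x)\,\bigl|\det J_{g^{-1}}(g(x))\bigr|, \qquad \tilde p_R(g(x)) = p_R(x)\,\bigl|\det J_{g^{-1}}(g(x))\bigr|.
\end{equation*}

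Second, I would take the ratio of these two expressions. Since both factors share exactly the same Jacobian determinant $|\det J_{g^{-1}}(g(x))|$ (it depends only on $g$ and the point, not on the distribution being transformed), this factor cancels and we obtain
\begin{equation*}
\frac{\tilde p_G(g(x))}{\tilde p_R(g(x))} \;=\; \frac{p_G(x)\,\bigl|\det J_{g^{-1}}(g(x))\bigr|}{p_R(x)\,\bigl|\det J_{g^{-1}}(g(x))\bigr|} \;=\; \frac{p_G(x)}{p_R(x)},
\end{equation*}
which is exactly the claimed equality.

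The only real subtlety---and the step worth flagging---is the regularity required of $g$ for the change-of-variables formula to apply pointwise in this clean form. Strictly speaking, the statement is cleanest when $g$ is a $C^1$ diffeomorphism; for merely bi-measurable bijections one should instead argue via Radon-Nikodym derivatives of pushforward measures, using that pushforward commutes with Radon-Nikodym differentiation to give $\frac{d(g_\# P_G)}{d(g_\# P_R)}(g(x)) = \frac{dP_G}{dP_R}(x)$ almost everywhere. Either route yields the same conclusion, and since the paper is working at the level of densities, the Jacobian argument above is the natural presentation. The dominance assumption $p_G \ll p_R$ is used only to guarantee that the ratio is meaningful on the support of $p_R$; it transfers to $\tilde p_G \ll \tilde p_R$ because pushforward by an invertible map preserves absolute continuity.
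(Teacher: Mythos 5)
Your proof is correct and follows essentially the same route as the paper's: apply the change-of-variables formula to both densities and cancel the common Jacobian factor in the ratio (your $\bigl|\det J_{g^{-1}}(g(x))\bigr|$ is just the reciprocal of the paper's $|J(x)|$, so the two presentations are equivalent). Your added remark on the Radon--Nikodym route for non-smooth bijections is a nice touch but not a departure from the paper's argument.
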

\begin{proof}
%\vspace{-4mm}
Without loss of generalisation let us assume continuous variables and almost everywhere continuous $g$. Using the chain rule, we have $\tilde{p}_{\cdot}(g(x)) = \frac{p_\cdot(x)}{|J(x)|}$ with Jacobian $J(x) = \frac{dg}{dx}(x)$. Hence we see:
\begin{equation*}
    \frac{\tilde{p}_G(g(x))}{\tilde{p}_R(g(x))}= \frac{p_G(x)/|J(x)|}{p_R(x)/|J(x)|} = \frac{p_G(x)}{p_R(x)}, a.e.
\end{equation*}
as desired.
\end{proof}

%Invertibility of $g$ ensures $|J(x)|\neq 0$ for any $x$ and that there is no information loss across representation. Note that invertability is less strict than one might initially think---e.g. when creating lower-dimensional representations one often makes the assumption the data is already (almost) restricted to a lower-dimensional manifold in the original space.

%\vspace{-0.3cm}
\textbf{DOMIAS does not purport false privacy safety for underrepresented groups} Figure \ref{fig:toy_example_eq1}a pinpoints a problem with previous works: methods that rely on assumption Eq. \ref{eq:assumption_prev} cannot attack low-density regions. As a result, one might conclude that samples in these regions are safer. Exactly the opposite is true: in Figure \ref{fig:toy_example_eq2} we see DOMIAS infers MI successfully for these samples, whatever the representation. This is distressing, as low-density regions may correspond to underrepresented groups in the population, e.g. ethnic minorities. We will explore this further in the experimental section.

\subsection{Illustrative attacker examples} %\vspace{-2mm}
Any density estimator can be used for approximating $p_G(X)$ and $p_R(X)$---e.g. fitting of some parametric family, training a generative model with Monte Carlo Integration, or a deep density estimator. The choice of density estimator should largely depend whether prior knowledge is available---e.g. $p_R$ falls in some parametric family---and on the size of the datasets---for a large dataset a more powerful and more flexible density estimator can be used, whereas for little data this is not suitable as it might lead to overfitting. In the experimental section, we illustrate DOMIAS using the flow-based BNAF \citep{deCao2019BlockFlow} density estimator, chosen for its training efficiency. For the ablation study in Sec. \ref{sec:ablation} we also include a Gaussian KDE-based method as a non-parametric alternative.

\section{RELATED WORK} \label{sec:related} %\vspace{-2mm}
%\vspace{-0.24cm}
\textbf{MIAs against generative models} Most of the literature on privacy attacks is focused on discriminative models, not generative models. The few works that are concerned with generative models all focus on membership inference (MIA) \citep{Shokri2017MembershipModels}. Here we focus on works that can be applied to our attacker setting, see Table \ref{tab:attacks}.

\citet{Hayes2019LOGAN:Models} propose LOGAN, a range of MIA attacks for both white-box and black-box access to the generative model, including possible auxiliary information. Two attacks can be applied to our setting. They propose a full black-box attack without auxiliary knowledge (i.e. no reference dataset). This model trains a GAN model on the synthetic data, after which the GAN's discriminator is used to compute the score for test examples. They also propose an attack that assumes an independent test set, similar to DOMIAS' $\cD_{ref}$---see Section 4.1 \citep{Hayes2019LOGAN:Models}, discriminative setting 1 (D1). Their attacker is a simple classifier that is trained to distinguish between synthetic and test samples. \citet{Hilprecht2019MonteModels} introduce a number of attacks that focus on approximating the generator distribution at each test point. 
Implicitly, they make assumption \ref{eq:assumption_prev}, and approximate the probability by using Monte Carlo integration, i.e. counting the proportion of generated points that fall in a given neighbourhood. They do not consider the possible attacker access to a reference dataset. Choosing a suitable distance metric for determining neighbourhoods is non-trivial, however this is somewhat alleviated by choosing a better space in which to compute metrics, e.g. \citeauthor{Hilprecht2019MonteModels} show that using the Euclidean distance is much more effective when used in conjunction with Principal Component Analysis (PCA). We refer to their method as MC, for Monte Carlo integration.

\citet{Chen2019GAN-Leaks:Models} give a taxonomy of MIAs  against GANs and propose new MIA method  GAN-leaks that relies on Eq. \ref{eq:assumption_prev}. For each test point $x^*$ and some $k\in\mathbb{N}$, they sample $S^k_G = \{x_i\}_{i=1}^k$ from generator $G$ and use score $A(x^*;G) = \min_{x_i\in S^k_G} L_2(x^*, x_i)$ as an unnormalised surrogate for $p_G(x^*)$. 
They also introduce a calibrated method that uses a reference dataset $\cD_{ref}$ to train a generative reference model $G_{ref}$, giving calibrated score $A(x^*;G,k)-A(x^*;G_{ref},k)$. 
This can be interpreted as a special case of DOMIAS---Eq. \ref{eq:assumption_domias}---that approximates $p_R$ and $p_G$ with Gaussian KDEs with infinitesimal kernel width, trained on a random subset of $k$ samples from $\cD_{ref}$ and $\cD_{syn}$. At last, we emphasise that though \citep{Hayes2019LOGAN:Models, Chen2019GAN-Leaks:Models} consider $\mathcal{D}_{ref}$ too, they (i) assume this implicitly and just for one of their many models, (ii) do not properly motivate or explain the need for having $\cD_{ref}$, nor explore the effect of $n_{ref}$, and (iii) their MIAs are technically weak and perform poorly as a result, leading to incorrect conclusions on the danger of this scenario (e.g. \citet{Hayes2019LOGAN:Models} note in their experiments that their D1 model performs no better than random guessing).

\begin{table*}[bt]
%\captionsetup{font=small}
    \centering
    \caption{Membership Inference attacks on generative models. (1) Underlying ML method (GAN: generative adversarial network, NN: (weighted) Nearest neighbour, KDE: kernel density estimation, MLP: multi-layer perceptron, DE: density estimator); (2) uses $\cD_{ref}$; (3) approximates Eq. \ref{eq:assumption_prev} or \ref{eq:assumption_domias}; (4) by default does not need generation access to generative model---only synthetic data itself. \textit{\textsuperscript{\textdagger}GAN-leaks calibrated is a heuristic correction to GAN-leaks, but implicitly a special case of Eq. \ref{eq:assumption_domias}.}} 
    \label{tab:attacks}
    \begin{tabular}{lcccc} \toprule
        Name & (1) & (2) & (3) & (4)\\ \midrule
        LOGAN 0\citep{Hayes2019LOGAN:Models} & GAN &  \xmark & Eq. \ref{eq:assumption_prev}& \cmark\\
        LOGAN D1 \citep{Hayes2019LOGAN:Models} & MLP & \cmark & N/A (heuristic) & \cmark\\
        MC \citep{Hilprecht2019MonteModels}& NN/KDE & \xmark & Eq. \ref{eq:assumption_prev}& \xmark\\
        GAN-leaks 0 \citep{Chen2019GAN-Leaks:Models} & NN/KDE & \xmark & Eq. \ref{eq:assumption_prev}& \xmark\\
        GAN-leaks CAL \citep{Chen2019GAN-Leaks:Models} & NN/KDE  & \cmark & \ \ Eq.  \ref{eq:assumption_domias}\textsuperscript{\textdagger} &\xmark\\ \hline
        DOMIAS (Us) & any DE & \cmark & Eq. \ref{eq:assumption_domias}& \cmark\\ \bottomrule
    \end{tabular}
%\vspace{-3mm}
\end{table*}

%They phrase the problem of MIAs  as a problem of reconstructing test samples using the generator---i.e. the aim is to generate samples using the generator, and evaluate which points in the test set can be best reconstructed. This view is in our eyes distracting from the aim of MIA, and in some cases misleading; it is clear that any point $x\in Supp(P_G)$ can be perfectly reconstructed if enough samples are generated. This is especially problematic in the partial black-box method---see Section 5.3 \citep{Chen2019GAN-Leaks:Models}, identical in nature to the methods proposed by \citet{Liu2019PerformingModels,Hilprecht2019MonteModels}---which assumes unlimited access to the generator (i.e. an infinitely-sized $\cD_{syn}$) and the ability to choose specific noise input: given enough calls to the generator, the MIA score (i.e. the reconstruction distance) will reflect nothing but the unpredictable, random success of the reconstruction process, since the optimal reconstruction distance is 0 for any $x\in Supp(P_G)$.

%\vspace{-0.24cm}
\textbf{Stronger attacker access assumptions} Other methods in \citep{Hayes2019LOGAN:Models, Hilprecht2019MonteModels, Chen2019GAN-Leaks:Models} make much stronger assumptions on attacker access. \citep{Hayes2019LOGAN:Models} propose multiple attacks with a subset of the training set known, which implies that there has already been a privacy breach---this is beyond the scope of this work. They also propose an attack against GANs that uses the GANs discriminator to directly compute the MIA score, but discriminators are usually not published. \citet{Chen2019GAN-Leaks:Models} propose attacks with white-box access to the generator or its latent code, but this scenario too can be easily avoided by not publishing the generative model itself. All methods in \citep{Hilprecht2019MonteModels, Chen2019GAN-Leaks:Models} assume unlimited generation access to the generator (i.e. infinitely-sized $\cD_{syn}$), which is unrealistic for a real attacker---either on-demand generation is unavailable or there is a cost associated to it that effectively limits the generation size \citep{DeCristofaro2021ALearning}. These methods can still be applied to our setting by sampling from the synthetic data directly.

%\vspace{-0.24cm}
\textbf{Tangential work}
The following MIA work is not compared against. \citet{Liu2019PerformingModels,Hilprecht2019MonteModels} introduce \textit{co-membership} \citep{Liu2019PerformingModels} or \textit{set MIA} \citep{Hilprecht2019MonteModels} attacks, in which the aim is to determine for a whole set of examples whether either all or none is used for training. Generally, this is an easier attack and subsumes the task of single attacks (by letting the set size be 1). %They find that VAEs are more vulnerable than GANs.
\citet{Webster2021ThisFaces} define the \textit{identity} membership inference attack against face generation models, which aims to infer whether some person was used in the generative model (but not necessarily a specific picture of that person). This requires additional knowledge for identifying people in the first place, and does not apply to our tabular data setting. \citet{Hu2021MembershipRegions} focus on performing high-precision attacks, i.e. determining MIA for a small number of samples with high confidence. Similar to us they look at overrepresented regions in the generator output space, but their work assumes full model access (generator and discriminator) and requires a preset partitioning of the input space into regions. \citep{Zhang2022MembershipData} is similar to \citep{Hilprecht2019MonteModels}, but uses contrastive learning to embed data prior to computing distances. In higher dimensions, this can be an improvement over plain data or simpler embeddings like PCA---something already considered by \citep{Hilprecht2019MonteModels}. However, the application of contrastive learning is limited when there is no \textit{a priori} knowledge for performing augmentations, e.g. in the unstructured tabular domain. 
 
On a final note, we like to highlight the relation between MIA and the evaluation of overfitting, memorisation and generalisation of generative models. The latter is a non-trivial task, e.g. see \citep{Gretton2012ATest,Lopez-Paz2016RevisitingTests,Arora2017GeneralizationGANs,Webster2019DetectingRecovery, Gulrajani2019TowardsGeneralization}. DOMIAS targets overfitting directly and locally through Eq. \ref{eq:assumption_domias}, a high score indicating local overfitting. 
%This score is closely related to KL-divergence through $KL[p_R||p_G] = -\E_{X\sim p_R} [\log A_{DOMIAS}(X)]$. 
DOMIAS differs from this line of work by focusing on MIA, requiring sample-based scores. DOMIAS scores can be used for interpreting overfitting of generative models, especially in the non-image domain where visual evaluation does not work.  
 
\section{EXPERIMENTS} \label{sec:experiments} %\vspace{-2mm}
We perform experiments showing DOMIAS' value and use cases. In Sec. \ref{sec:domias_vs_baselines} we show how DOMIAS outperforms prior work, in Sec. \ref{sec:ablation} we explore why. Sec. \ref{sec:underrepresented} demonstrates how underrepresented groups in the population are most vulnerable to DOMIAS attack, whilst Sec. \ref{sec:generative_model_comparison} explores the vulnerability of different generative models---showcasing how DOMIAS can be used as a metric to inform synthetic data generation.  For fair evaluation, the same experimental settings are used across MIA models (including $n_{ref}$). Details on experimental settings can be found in Appendix \ref{appx:experimental_details}.\footnote{Code is available at \\ \href{https://github.com/vanderschaarlab/DOMIAS}{https://github.com/vanderschaarlab/DOMIAS}} 

\subsection{DOMIAS outperforms prior MIA methods} \label{sec:domias_vs_baselines} %\vspace{-2mm}
\begin{figure*}[hbt]
%\captionsetup{font=small}

    \centering
    \begin{subfigure}{0.48\textwidth}
    \centering
    \includegraphics[width=0.9\textwidth]{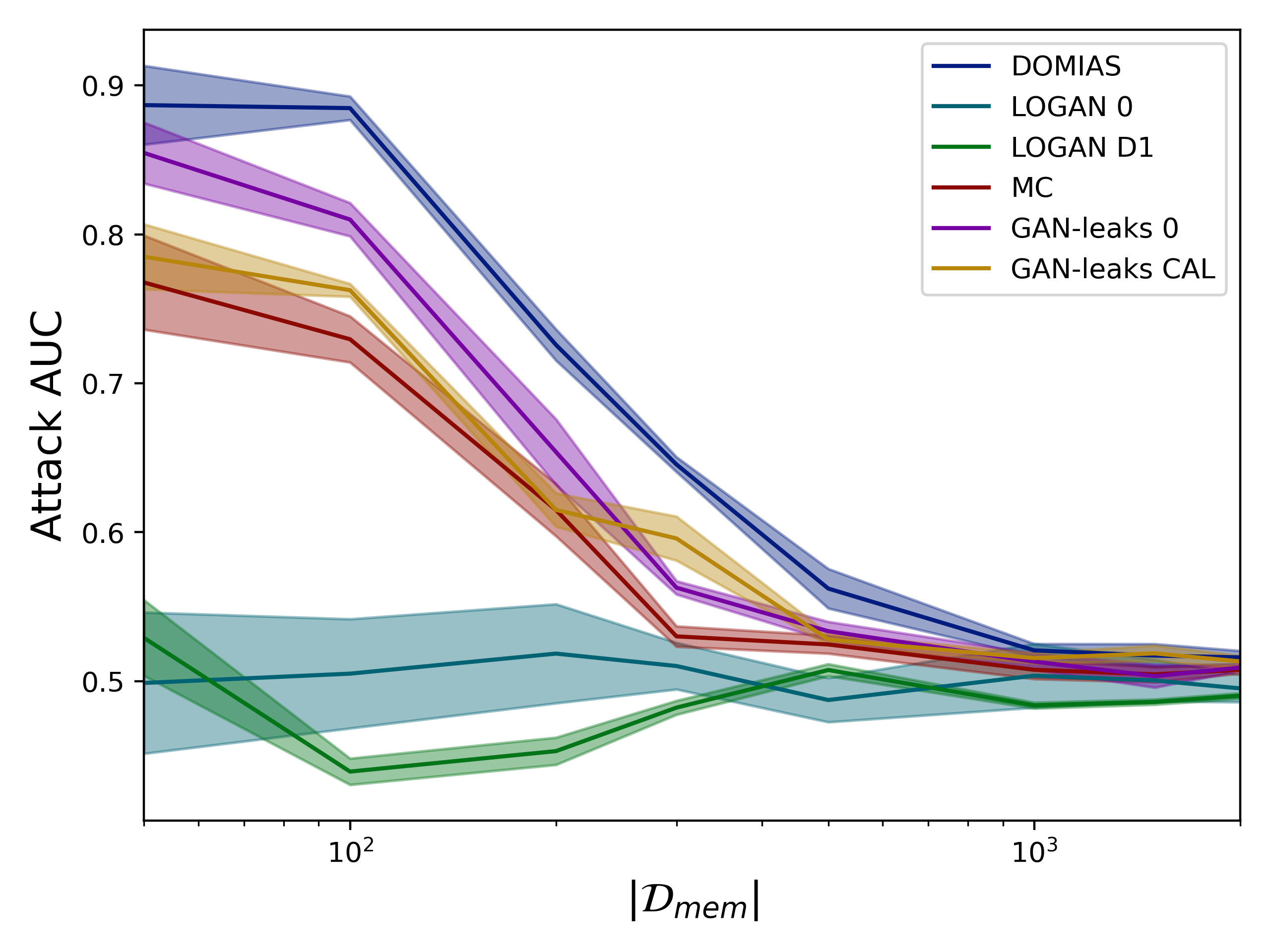}
    %\caption{}%MIA($n_{M}$)}
    \end{subfigure}
    \hfill
    \begin{subfigure}{0.48\textwidth}
    \centering
    \includegraphics[width=0.9\textwidth]{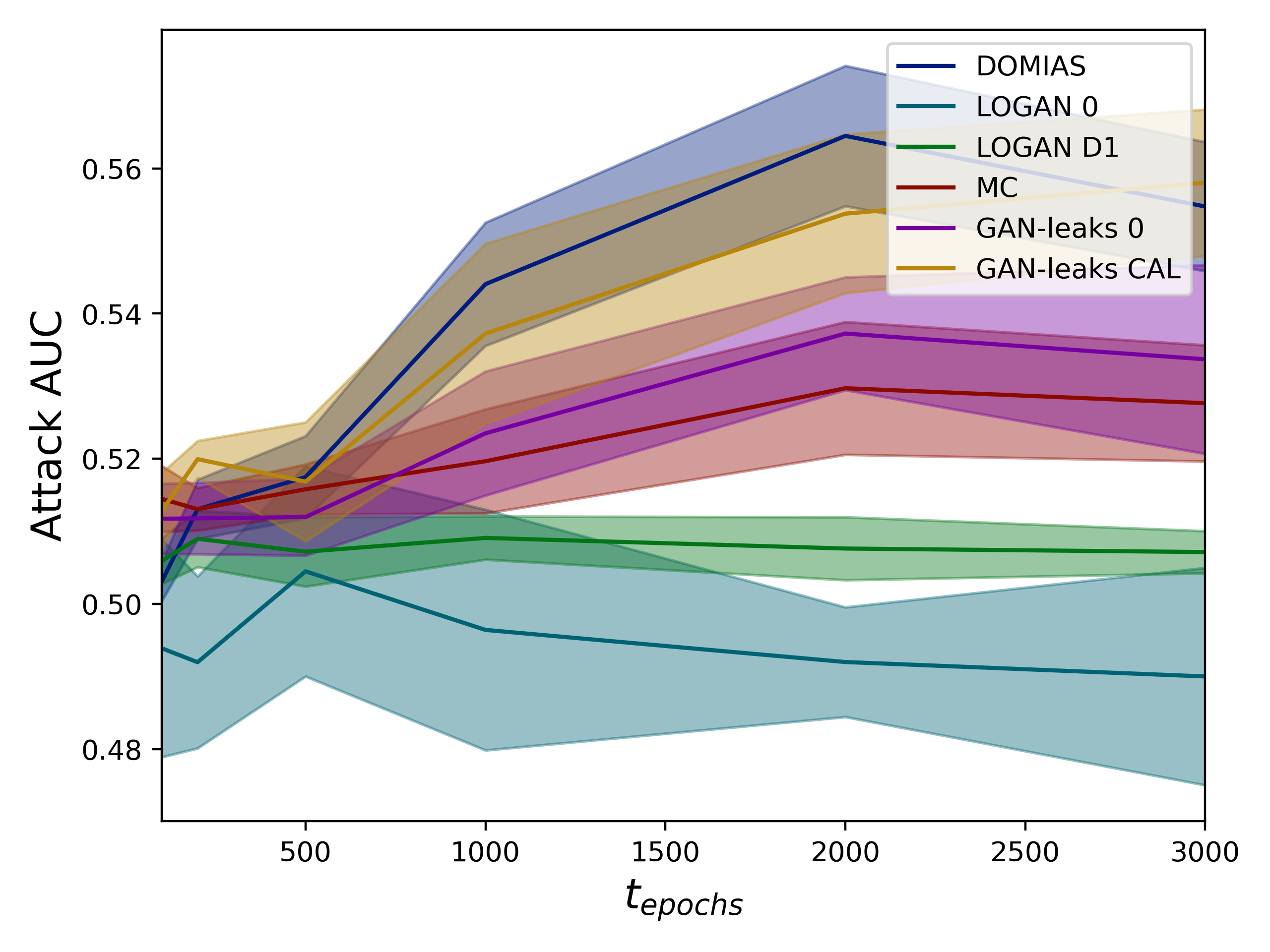}
    %\caption{}%MIA($t_{epochs}$)}
    \end{subfigure}
    \caption{\emph{DOMIAS outperforms baselines.} MIA performance of DOMIAS and baselines versus the generative model training set size $|\cD_{mem}|$ and training time $t_{epochs}$ on the California Housing dataset. We observe how MIA AUC goes up for fewer training samples and long generative model training time, as both promote overfitting.}
    \label{fig:domias_vs_baselines}
%\vspace{-3mm}
\end{figure*}

%\vspace{-0.24cm}
\textbf{Set-up} We use the California Housing Dataset \citep{Pace1997SparseAutoregressions} and use TVAE \citep{Xu2019ModelingGAN} to generate synthetic data. In this experiment we vary the number of TVAE training samples $|\cD_{mem}|$ and TVAE number of training epochs. We compare DOMIAS against LOGAN 0 and LOGAN D1 \citep{Hayes2019LOGAN:Models}, MC \citep{Hilprecht2019MonteModels}, and GAN-Leaks 0 and GAN-Leaks CAL \citep{Chen2019GAN-Leaks:Models}---see Table \ref{tab:attacks}.

%\vspace{-0.24cm}
\textbf{DOMIAS consistently outperforms baselines}
Figure \ref{fig:domias_vs_baselines}(a) shows the MIA accuracy of DOMIAS and baselines against TVAE's synthetic dataset, as a function of the number of training samples TVAE $n_{mem}$. For small $n_{mem}$ TVAE is more likely to overfit to the data, which is reflected in the overall higher MIA accuracy. Figure \ref{fig:domias_vs_baselines}(b) shows the MIA accuracy as a function of TVAE training epochs. Again, we see TVAE starts overfitting, leading to higher MIA for large number of epochs. 

In both plots, we see DOMIAS consistently outperforms baseline methods. Similar results are seen on other datasets and generative models, see Appendix \ref{appx:additional_results}. Trivially, DOMIAS should be expected to do better than GAN-Leaks 0 and LOGAN 0, since these baseline methods do not have access to the reference dataset and are founded on the flawed assumption of Eq. \ref{eq:assumption_prev}---which exposes the privacy risk of attacker access to a reference dataset.

\begin{figure*}[hbt]
%\captionsetup{font=small}
    \centering
    \begin{subfigure}{0.48\textwidth}
    \centering
    \includegraphics[width=0.9\textwidth]{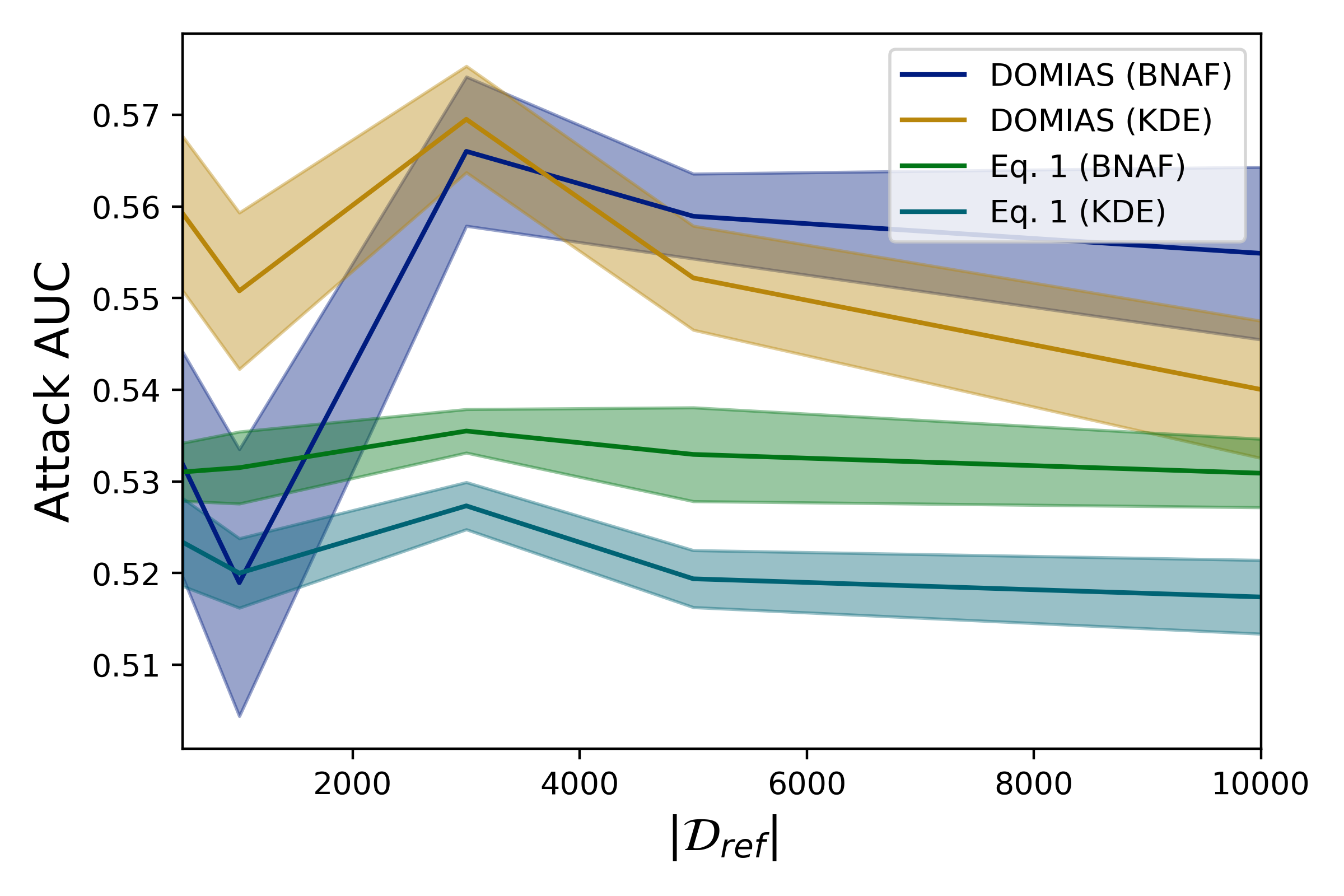}
    %\caption{}%MIA($n_{M}$)}
    \end{subfigure}
    \hfill
    \begin{subfigure}{0.48\textwidth}
    \centering
    \includegraphics[width=0.9\textwidth]{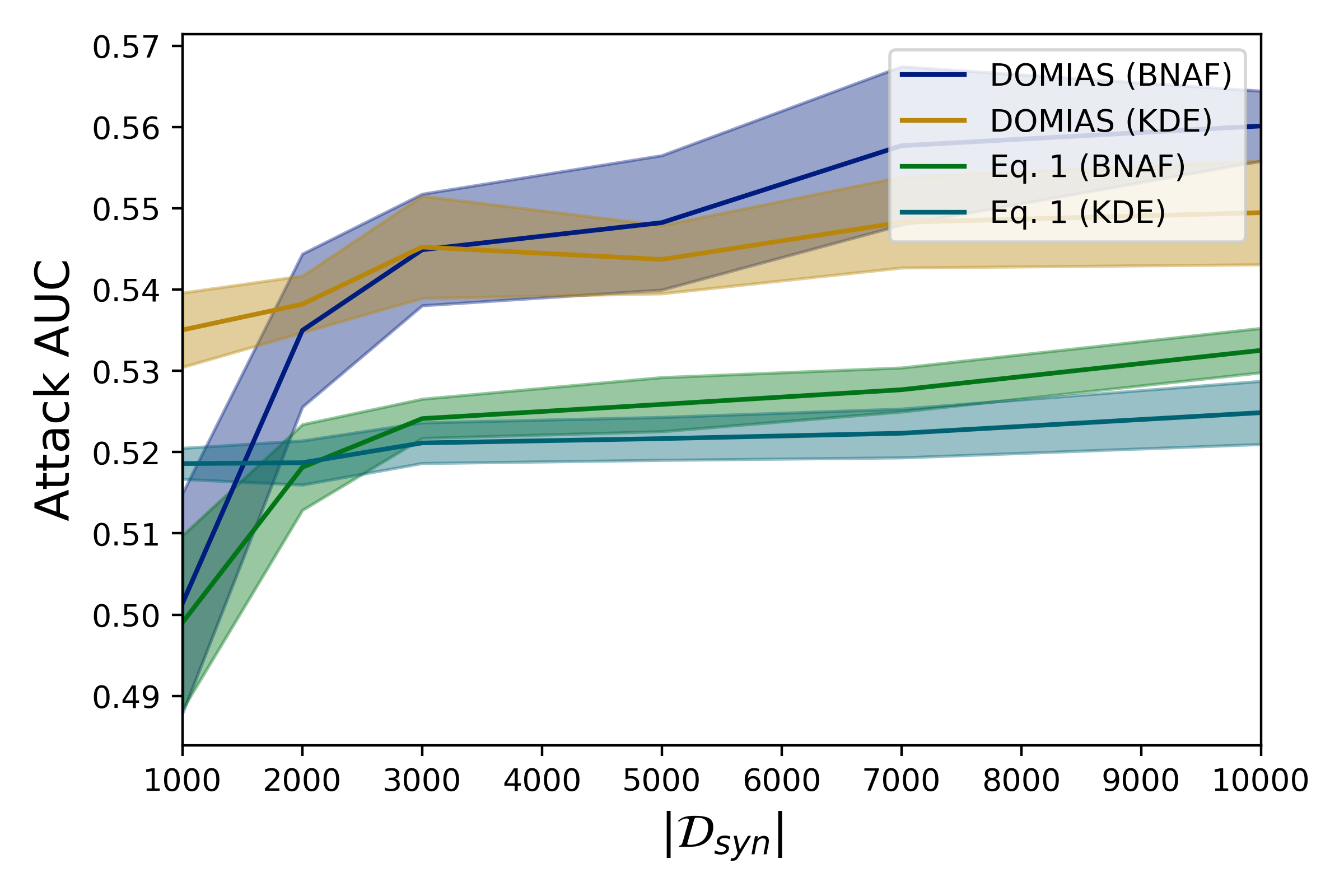}
    %\caption{}%MIA($t_{epochs}$)}
    \end{subfigure}
    \caption{\emph{DOMIAS source of gain.} Ablation study of DOMIAS on the California Housing dataset, with attack performance as a function of the reference dataset size (left) and the synthetic dataset size (right). We see that the MIA performance of DOMIAS is largely due to assumption Eq. \ref{eq:assumption_domias} vs. Eq. \ref{eq:assumption_prev}, i.e. the value of the reference dataset. The deep flow-based density estimator delivers gains over the simpler KDE approach when enough samples are available.}
    \label{fig:results_ablation}
%\vspace{-1mm}
\end{figure*}

\subsection{Source of gain} \label{sec:ablation} %\vspace{-2mm}
Using the same set-up as before, we perform an ablation study on the value of i) DOMIAS' use of the reference set, and ii) the deep density estimator. For the first, we compare using the DOMIAS assumption (Eq. \ref{eq:assumption_domias}) vs the assumption employed in many previous works (Eq. \ref{eq:assumption_prev}). For the latter, we compare the results for density estimation based on the flow-based BNAF \citep{deCao2019BlockFlow} versus a Gaussian kernel density estimator---kernel width given by the heuristic from \citep{Scott1992MultivariateEstimation}. 

Figure \ref{fig:results_ablation} shows the MIA performance as a function of $n_{syn}$ and $n_{ref}$. Evidently, the source of the largest gain is the use of Eq. \ref{eq:assumption_domias} over Eq. \ref{eq:assumption_prev}. As expected, the deep density estimator gives further gains when enough data is available. For lower amounts of data, the KDE approach is more suitable. This is especially true for the approximation of $p_R$ (the denominator of Eq. \ref{eq:assumption_domias})---small noise in the approximated $p_R$ can lead to large noise in MIA scores. Also note in the right plot that MIA performance goes up with $|\cD_{syn}|$ across methods due to the better $p_G$ approximation; this motivates careful consideration for the amount of synthetic data published.

%\textcolor{blue}{Split up in two parts: verifying our work, and analysing +insight}
\subsection{Underrepresented group MIA vulnerability} \label{sec:underrepresented} %\vspace{-2mm}
\textbf{Set-up} We use a private medical dataset on heart failure, containing around $40,000$ samples with $35$ mixed-type features (see Appendix \ref{appx:experimental_details}). We generate synthetic data using TVAE \citep{Xu2019ModelingGAN}.

\begin{figure*}[bt]
%\captionsetup{font=small}

    \centering
    \includegraphics[width=\textwidth]{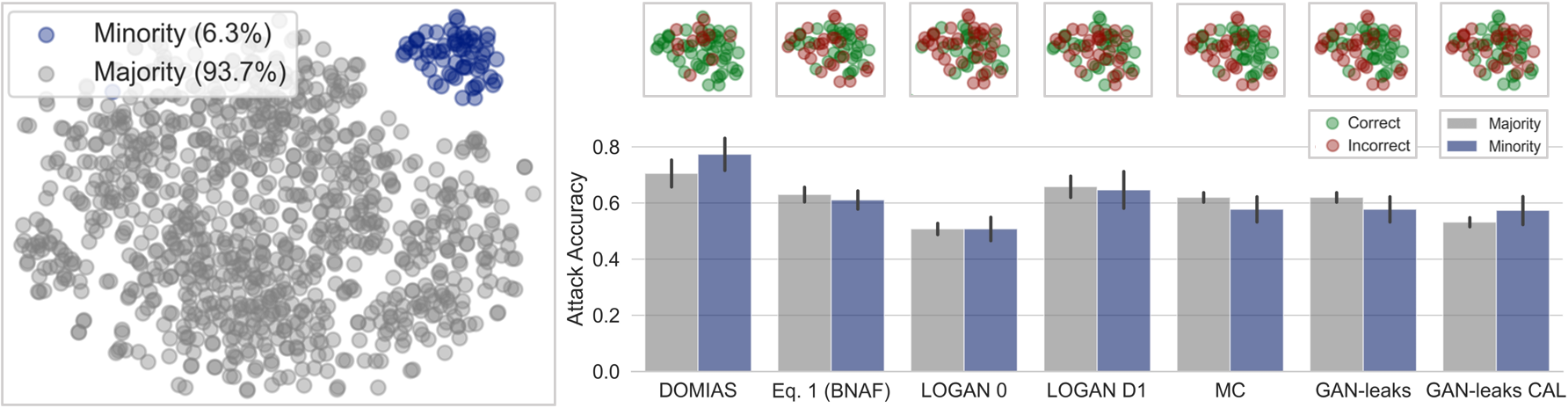}
    \caption{\emph{DOMIAS is more successful at attacking patients taking high-blood pressure medication. }(left) T-SNE plot of Heart Failure test dataset. There is a cluster of points visible in the top right corner, which upon closer inspection corresponds to subjects who take ARB medication. (right, bottom) Attacking accuracy of DOMIAS and baselines on majority and minority group (averaged over 8 runs). DOMIAS is significantly better at attacking the minority group than the general population. Except for GAN-leaks CAL, baselines fail to capture the excess privacy risk to the patients with blood pressure medication. Comparing DOMIAS with Eq. 1 (BNAF) (see Sec. \ref{sec:ablation}), we see that the minority vulnerability is largely due to the availability of the reference data. (right, top) Single run attacking success of different MIA methods on these underrepresented samples; correctly inferred membership in green, incorrectly inferred in red.}
    \label{fig:vulnerable}
%\vspace{-3mm}
\end{figure*}

%ARBs (Angiotensin II receptor blockers), medication used against high-blood pressure.

%\vspace{-0.24cm}
\textbf{Minority groups are most vulnerable to DOMIAS attack} As seen in Sec. \ref{sec:method}, the assumption underlying previous work (Eq. \ref{eq:assumption_prev}) will cause these methods to never infer membership for low-density regions. This is problematic, as it gives a false sense of security for these groups---which are likely to correspond to underrepresented groups.

The left side of Figure \ref{fig:vulnerable} displays a T-SNE embedding of the Heart Failure dataset, showing one clear minority group, drawn in blue, which corresponds to patients that are on high-blood pressure medication---specifically, Angiotensin II receptor blockers. The right side of Figure \ref{fig:vulnerable} shows the performance of different MIA models. DOMIAS is significantly better at attacking this vulnerable group compared to the overall population, as well as compared to other baselines. This is not entirely surprising; generative models are prone to overfitting regions with few samples. Moreover, this aligns well with supervised learning literature that finds additional vulnerability of low-density regions, e.g. \citep{Kulynych2019DisparateAttacks, Bagdasaryan2019DifferentialAccuracy}.  Importantly, most MIA baselines give the false pretense that this minority group is \textit{less vulnerable}. Due to the correspondence of low-density regions and underrepresented groups, \emph{these results strongly urge further research into privacy protection of low-density regions when generating synthetic data.} 

\begin{figure}[hbt]
%\captionsetup{font=small}
    \centering
    \includegraphics[width=0.5\textwidth]{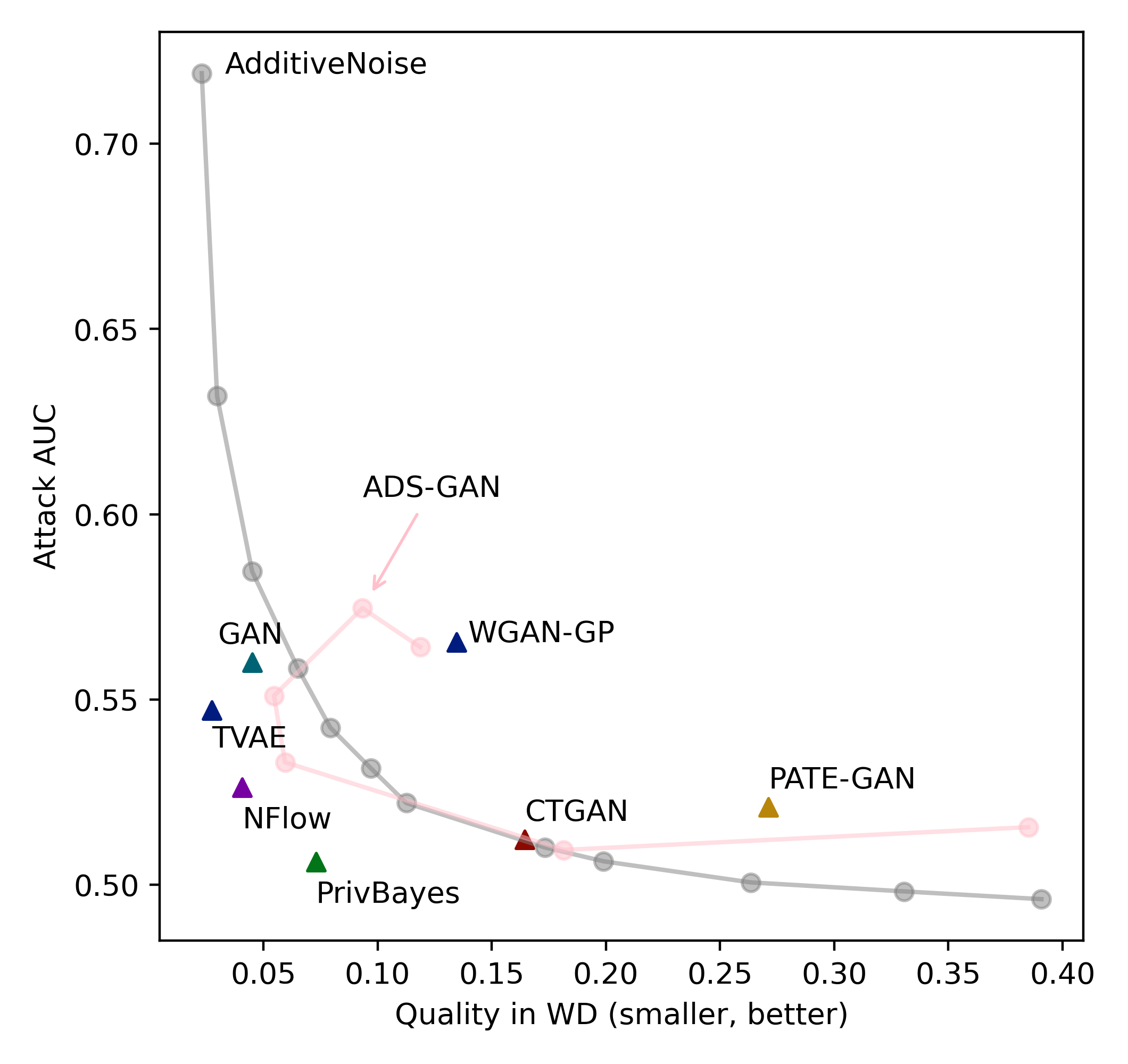}
    \caption{\emph{DOMIAS can be used to quantify synthetic data MIA vulnerability.} We plot the synthetic data quality versus DOMIAS AUC for different generative models on the California Housing dataset. There is a clear trade-off: depending on the tolerated MIA vulnerability, different synthetic datasets are best.}
    \label{fig:generative_model_comparison}
    %\vspace{-3mm}
\end{figure}

\subsection{DOMIAS informs generative modelling decisions} \label{sec:generative_model_comparison} 
%\vspace{-2mm}
\textbf{Set-up} Again we use the California Housing dataset, this time generating synthetic data using different generative models. We evaluate the quality and MIA vulnerability of GAN, \citep{Goodfellow2014GenerativeNetworks}, WGAN-GP \citep{Arjovsky2017WassersteinNetworks,Gulrajani2017ImprovedGANs}, CTGAN and TVAE \citep{Xu2019ModelingGAN}, %RTVAE \citep{Akrami2020RobustDivergence}, 
NFlow \citep{Durkan2019NeuralFlows}, PATE-GAN \citep{Jordon2019PATE-GAN:Guarantees}, PrivBayes \citep{Zhang2017Privbayes:Networks}, and ADS-GAN \citep{Yoon2020Anonymizationads-gan}. As a baseline, we also include the anonymization method of sampling from training data and adding Gaussian noise. For ADS-GAN and the additive noise model, we vary the privacy level by raising the hyperparameter $\lambda$ and noise variance, respectively. Results for other attackers are found in Appendix \ref{appx:additional_results}.

%\vspace{-0.25cm}
\textbf{DOMIAS quantifies MIA vulnerability}
Figure \ref{fig:generative_model_comparison} presents the DOMIAS MIA AUC against the data quality (in terms of Wasserstein Distance to an independent hold-out set), averaged over eight runs. We see a clear privacy-utility trade-off, with the additive noise model giving a clean baseline. The NeurIPS 2020 Synthetic Data competition \citep{Jordon2021Hide-and-SeekRe-identification} concluded that disappointingly, adding noise usually outperformed generative models in terms of the privacy-utility trade-off. Though we find this is true for WGAN-GP, PATE-GAN and CTGAN---which fall on the right side of the additive noise curve---other methods do yield better synthetic datasets. 

ADS-GAN is based on WGAN-GP, hence for small $\lambda$ (the privacy regularizer) it gets a similar score. Increasing $\lambda$ promotes a higher distance between generated and training data, hence this reduces vulnerability. At first, it also leads to an increase in quality---raising $\lambda$ leads to lower overfitting---but when $\lambda$ increases further the generative distribution is distorted to the point that quality is significantly reduced. In contrast to \citep{Hilprecht2019MonteModels}, we do not find evidence that VAEs are more vulnerable to MIAs than GANs. The Pareto frontier is given by the additive noise method, TVAE, NFlow and PrivBayes, hence the best synthetic data model will be one of these, depending on the privacy requirements.

\section{DISCUSSION} \label{sec:discussion} %\vspace{-2mm}
\textbf{DOMIAS use cases} DOMIAS is primarily a tool for evaluating and interpreting generative model privacy. The overall DOMIAS attacking success is a metric for MIA vulnerability, and may hence guide generative model design choices---e.g. choosing privacy parameters---or aid evaluation---including for competitions like \citep{Jordon2021Hide-and-SeekRe-identification}. Since DOMIAS provides a sample-wise metric, its scores can also provide insight into privacy and overfitting of specific samples or regions in space---as seen in Sec. \ref{sec:underrepresented}. Future work may adopt DOMIAS for active privacy protection, e.g. as a loss during training or as an auditing method post-training---removing samples that are likely overfitted. %At last, we note that though DOMIAS is intended to give insight in synthetic data privacy, it can also be employed for ethically acceptable attacks. For example, regulators may initiate an MIA to investigate whether data companies have used private data on an inappropriate legal basis---see e.g. \citep{Martin2017GoogleBasis}.

%\vspace{-0.24cm}
\textbf{Underrepresented groups are more vulnerable to MIA attacks} Generative models are more likely to overfit low-density regions, and we have seen DOMIAS is indeed more successful at attacking these samples. This is distressing, since these regions can correspond to underrepresented groups in the population. Similar results have been found in supervised learning literature, e.g. \citep{Kulynych2019DisparateAttacks, Bagdasaryan2019DifferentialAccuracy}. Protecting against this vulnerability is a trade-off, as outliers in data can often be of interest to downstream research. It is advisable data publishers quantify the excess MIA risk to specific subgroups.

%\vspace{-0.24cm}
\textbf{Attacker calibration} In practice, it will often be unknown how much of the test data was used for training. Just like related works, we have ignored this. This challenge is equivalent to choosing a suitable threshold, or suitable $f$ in Eq. \ref{eq:assumption_domias} and relates closely to calibration of the attacker model, which is challenging for MIA since---to an attacker---usually no ground-truth labels are available. Future work can explore assumptions or settings that could enable calibrated attacks. In Appendix \ref{appx:high_precision attacks} we include results for high-precision attacks.

\textbf{High-dimensionality and image data} Traditional density estimation methods (e.g. KDE) perform notoriously poorly in high dimensions. Recent years have seen a rise in density estimation methods that challenge this conception. Domain-specific density estimators, e.g. that define density on lower-dimensional embeddings, can be readily used in DOMIAS. We include preliminary results for the high-dimensional CelebA image dataset in Appendix \ref{sec:celeba}.

\textbf{Training data size} We have seen that for large number of training samples, the performance of all attackers goes down to almost 0.5. The same is observed for large generative image models, Appendix \ref{sec:celeba}. This is reassuring for synthetic data publishers, for whom this indicates a relatively low privacy risk globally. However, global metrics may hide potential high-precision attacks on a small number of individuals, see Appendix \ref{appx:high_precision attacks}.

%\vspace{-0.24cm}
\textbf{Availability of reference dataset} DOMIAS assumes the presence of a reference dataset that enables approximating the true distribution $p_R(X)$. In case there is not sufficient data for the latter, more prior knowledge can be included in the parametrisation of $p_R$; e.g. choose $p_R(X)$ to lie in a more restrictive parametric family. Even in the absence of any data $\cD_{ref}$, an informed prior (e.g. Gaussian) based on high-level statistics can already improve upon related works that rely on assumption Eq. \ref{eq:assumption_prev}---see Appendix \ref{appx:gaussian_prior} for results. In Appendix \ref{appx:distributional_shift} we include further experiments with distributional shifts between the $\cD_{ref}$ and $\cD_{mem}$, in which we find that even with moderate shifts the use of a reference dataset is beneficial.

%\textbf{DOMIAS and synthetic data quality} In addition to using the DOMIAS attacking performance as a privacy metric, DOMIAS (Eq. \ref{eq:assumption_domias}) is a \emph{local} metric for overfitting that can be used for quality assessment: overrepresented regions give $A_{DOMIAS}>1$, underrepresented regions $A_{DOMIAS}<1$. There is a clear relation to existing \emph{global} synthetic data \emph{quality} metrics, most notably KL-divergence: $KL[p_R||p_G] = -\E_{X\sim p_R} [\log A_{DOMIAS}(X)]$. This relationship provides nuance to what is often called the privacy-utility trade-off \citep{Jordon2021Hide-and-SeekRe-identification,Yoon2020Anonymizationads-gan,Alaa2021HowModels}; generative model generalisation and protection against MIA attacks go hand in hand.

%\vspace{-0.24cm}
\textbf{Publishing guidelines} Synthetic data does not guarantee privacy, however the risk of MIA attacks can be lessened when synthetic data is published considerately. Publishing just the synthetic data---and not the generative model---will in most cases be sufficient for downstream research, while avoiding more specialised attacks that use additional knowledge. Further consideration is required with the amount of data published: increasing the amount of synthetic data leads to higher privacy vulnerability (Figure \ref{fig:results_ablation}b and see \citep{Gretton2012ATest}). Though the amount of required synthetic data is entirely dependent on the application, DOMIAS can aid in finding the right privacy-utility trade-off.

%\vspace{-0.24cm}
\textbf{Societal impact} We believe DOMIAS can provide significant benefits to the future privacy of synthetic data, and that these benefits outweigh the risk DOMIAS poses as a more successful MIA method. On a different note, we highlight that success of DOMIAS implies privacy is not preserved, but not vice versa. Specifically, DOMIAS should not be used as a certificate for data privacy. Finally, we hope the availability of a reference dataset is a setting that will be considered in more ML privacy work, as we believe this is more realistic in practice than many more popular MIA assumptions (e.g. white-box generator), yet still poses significant privacy risks.

\subsubsection*{Acknowledgements}
We would like to thank the Office of Navel Research UK, who funded this research.

%%%%%%%%%%%%%%%%%%%%%%%%%%%%%%%%%%%%%%%%%%%%%%%%%%%%%%%%%%%%
% \bibliographystyle{unsrtnat}
% \bibliography{references.bib}

%%%%%%%%%%%%%%%%%%%%%%%%%%%%%%%%%%%%%%%%
% If you have textual supplementary material
\appendix
\onecolumn

\section{EXPERIMENTAL DETAILS} \label{appx:experimental_details}
%all experimental details.
\subsection{Workflow}

Input: Reference data $\mathcal{D}_{ref}$ , synthetic data $\mathcal{D}_{syn}$ and test data $\mathcal{D}_{test}$.\\
Output: $\hat{m}$ for all $x\in \mathcal{D}_{test}$.\\
Steps:
\begin{enumerate}
\item Train density model $p_R(X)$ on $\mathcal{D}_{ref}$.
\item Train density model $p_G(X)$ on $\mathcal{D}_{syn}$.
\item Compute $A_{DOMIAS}(x)=\frac{p_G(x)}{p_R(x)}$ for all $x\in\mathcal{D}_{test}$
\item Choose threshold $\tau$, e.g. $\tau = median \{A_{DOMIAS}(x)|x\in\mathcal{D}_{test}\}$
\item Infer 
\begin{equation*}
    \hat{m} =\begin{cases}1, &\text{if } A_{DOMIAS}(x)>\tau,\\ 0, &\text{ otherwise},
    \end{cases}
\end{equation*}
for all $x\in\cD_{test}$.
\end{enumerate}

\subsection{Data}
We use the California housing \citep{Pace1997SparseAutoregressions} (license: CC0 public domain) and Heart Failure (private) datasets, see Table \ref{tab:datasets} and Figure \ref{fig:data_correlation} for statistics. All data is standardised.

\begin{table}[hbt]
    \centering
    \caption{Dataset statistics}
    \label{tab:datasets}
    \begin{tabular}{l|c|c} \toprule
        & California Housing & Heart Failure \\ \midrule
        Number of samples & 20640 & 40300\\
        Number of features & 8 & 35\\
        - binary & 0 & 25\\
        - continuous & 8 & 10\\ \bottomrule
    \end{tabular}

\end{table}

\begin{figure*}[hbt]
%\captionsetup{font=small}
    \centering
    \begin{subfigure}{0.48\textwidth}
    \centering
    \includegraphics[width=0.9\textwidth]{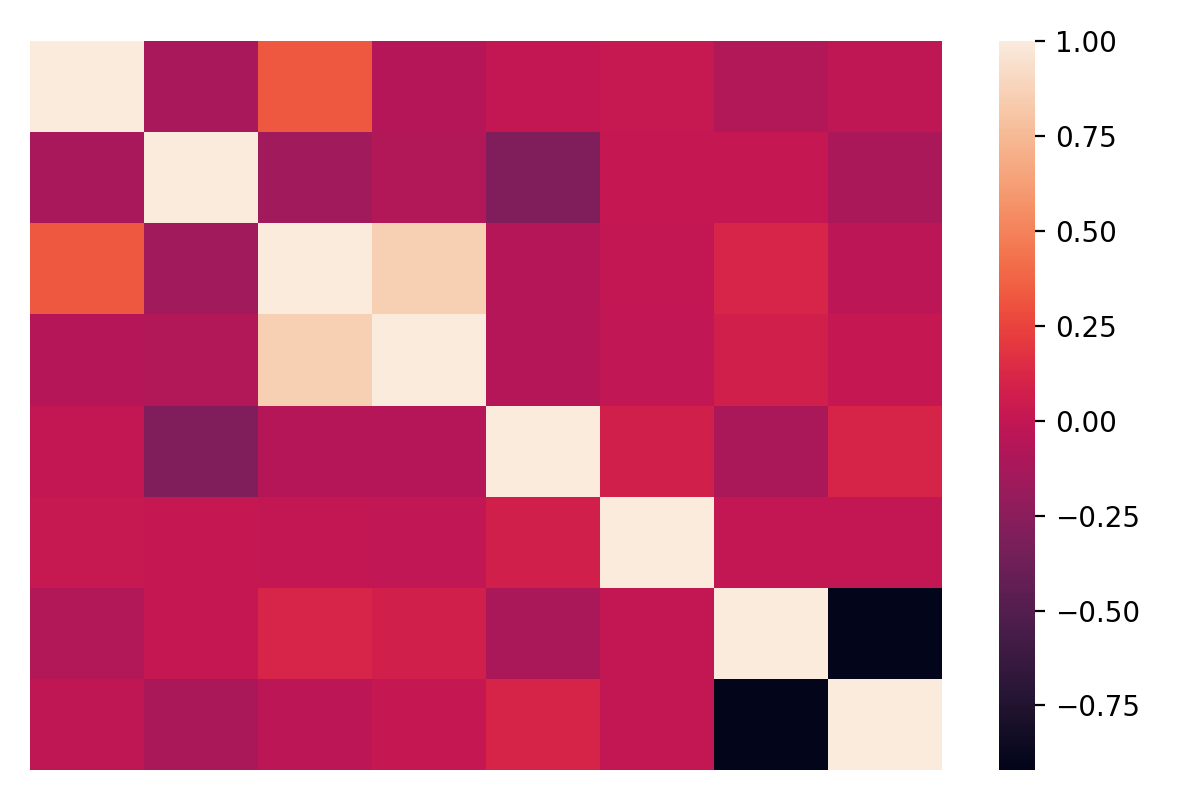}
    \caption{Housing}
    \end{subfigure}
    \hfill
    \begin{subfigure}{0.48\textwidth}
    \centering
    \includegraphics[width=0.9\textwidth]{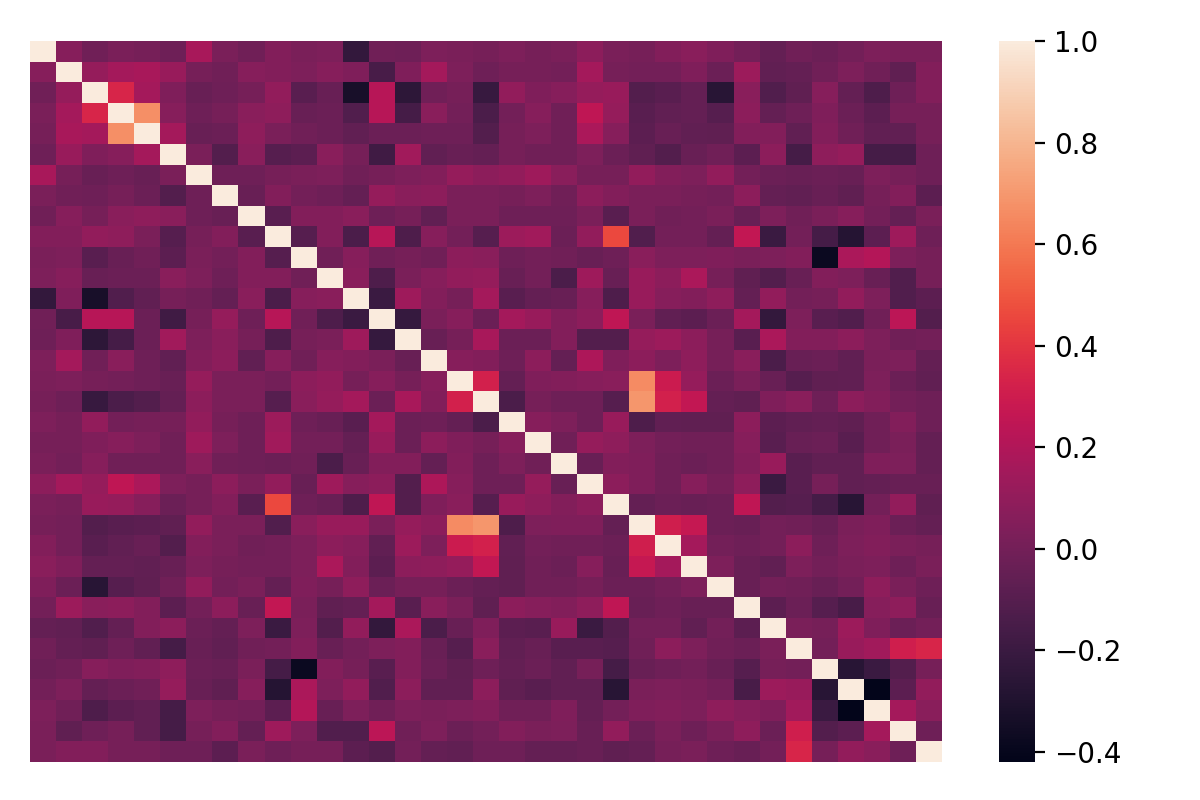}
    \caption{Heart Failure}
    \end{subfigure}
    \caption{Correlation matrices of features within Housing and Heart Failure datasets. The first feature of the Heart Failure dataset is used for defining the minority group in Section 5.3.}
    \label{fig:data_correlation}
\vspace{-0.25cm}\end{figure*}

\subsection{Experimental settings}

All results reported in our paper are based on $8$ repeated runs, with shaded area denoting standard deviations. We experiment on a machine with 8 Tesla K80 GPUs and 32 Intel(R) E5-2640 CPUs. We shuffle the dataset and split the dataset into training set, test set, and reference set. The attack performance is computed over a test set consisting of $50\%$ training data (i.e. samples from $\cD_{mem})$ and $50\%$ non-training data. Choices of sizes for those sets are elaborated below.%\textcolor {red}{@Hao: add type of gpu used, and info on validation/test set split (size) etc. }

%\subsection{Experiment 1} Take datasets California Housing, MAGGIC, MNIST, CIFAR-10. Take a generative model for each, for some $n_{M}$ (setting $n_R = n_G = 10000$, 
\paragraph{Experimental Details for Section 5.1}
In this section, we experimented on the California Housing Dataset to compare different MIA performance with DOMIAS. For the experiment varying the number of members in the training dataset (i.e. left panel of Figure 3), we use a fixed training epoch $2000$, a fixed number of reference example $|\cD_{ref}|=10000$ and a fixed number of generated example $|\cD_{syn}|=10000$. For the experiment varying the number of training epochs of TVAE (i.e. the right panel of Figure 3), we use a fixed training set size $|\cD_{mem}|=500$, a fixed number of reference example $|\cD_{ref}|=10000$ and a fixed number of generated example $|\cD_{syn}|=10000$. Training with a single seed takes $2$ hours to run in our machine with BNAF as the density estimator. 
%\textcolor{red}{Hao: add time to run}

In BNAF density estimation, the hyper-parameters we use are listed in Table~\ref{tab:hyper-param-bnaf}. Our implementation of TVAE is based on the source code provided by ~\citep{Xu2019ModelingGAN}.
\begin{table}[hbt]
    \centering
    \caption{Hyperparameters for BNAF}
    \label{tab:hyper-param-bnaf}
    \begin{tabular}{c|c}
    \toprule
        batch-dim & 50 \cr
        n-layer & 3 \\
        hidden-dim & 32 \\
        flows & 5 \\
        learning rate &$0.01$ \\
        epochs & 50 \\
    \bottomrule
    \end{tabular}
\end{table}

\paragraph{Experimental Details for Section 5.2}
In our experiments varying the number of reference data $n_{ref}$, i.e. results reported in the left panel of Figure 4, we fix the training epoch to be $2000$, set $n_{syn}=10000$ and $n_{M}=500$. In the experiments varying the number of generated data $n_{syn}$, i.e. results reported in the right panel of Figure 4, we set $n_{ref}=10000$, training epoch to be $2000$, and $n_{mem}=500$. Our implementation of the kernel density estimation is based on \textit{sklearn} with an automated adjusted bandwidth. Training with a single seed takes $0.5$ hours to finish in our machine with the kernel density estimator.
%\textcolor{red}{Hao: add time to run}

\paragraph{Experimental Details for Section 5.3} Based on results of Section 5.2, the attacking performance on different subgroups can be immediately calculated by adopting appropriate sample weights. %\textcolor{red}{Hao: add details + time to run}

\paragraph{Experimental Details for Section 5.4}
In the Additive-Noise baseline curve, results are generated with the following noise values: 
$[0.7,
0.9,
1.1,
1.3,
1.5,
1.7,
1.9,
2.3,
2.5,
2.9,
3.5,
3.9]$. In the ADS-GAN curve, results are generated with the following privacy parameter $\lambda = [0.2, 0.5, 0.7, 1.0, 1.1, 1.3, 1.5]$. 
In the WGAN-GP we use a gradient penalty coefficient $10.0$. All the other methods are implemented with recommended hyper-parameter settings. Training different generative models are not computational expensive and take no more than $10$ minutes to finish in our machine. Using a kernel density estimator and evaluating all baseline methods take another $20$ minutes, while using a BNAF estimator takes around $1.5$ more hours.

%\textcolor{red}{Hao: add time to run}
%\textcolor{red}{@hao, how about other baselines?}

%newpage

\section{ADDITIONAL EXPERIMENTS} \label{appx:additional_results}

\subsection{Experiment 5.1 and 5.2 on Heart Failure dataset}
We repeat the experiments of Section 5.1 and 5.2 on the Heart Failure dataset, see Figures \ref{fig:domias_vs_baselines_heart_failure} and \ref{fig:ablation_results_heart_failure}. Results are noisier, but we observe the same trends as in Sections 5.1 and 5.2

\begin{figure*}[hbt]
%\captionsetup{font=small}

    \centering
    \begin{subfigure}{0.48\textwidth}
    \centering
    \includegraphics[width=0.9\textwidth]{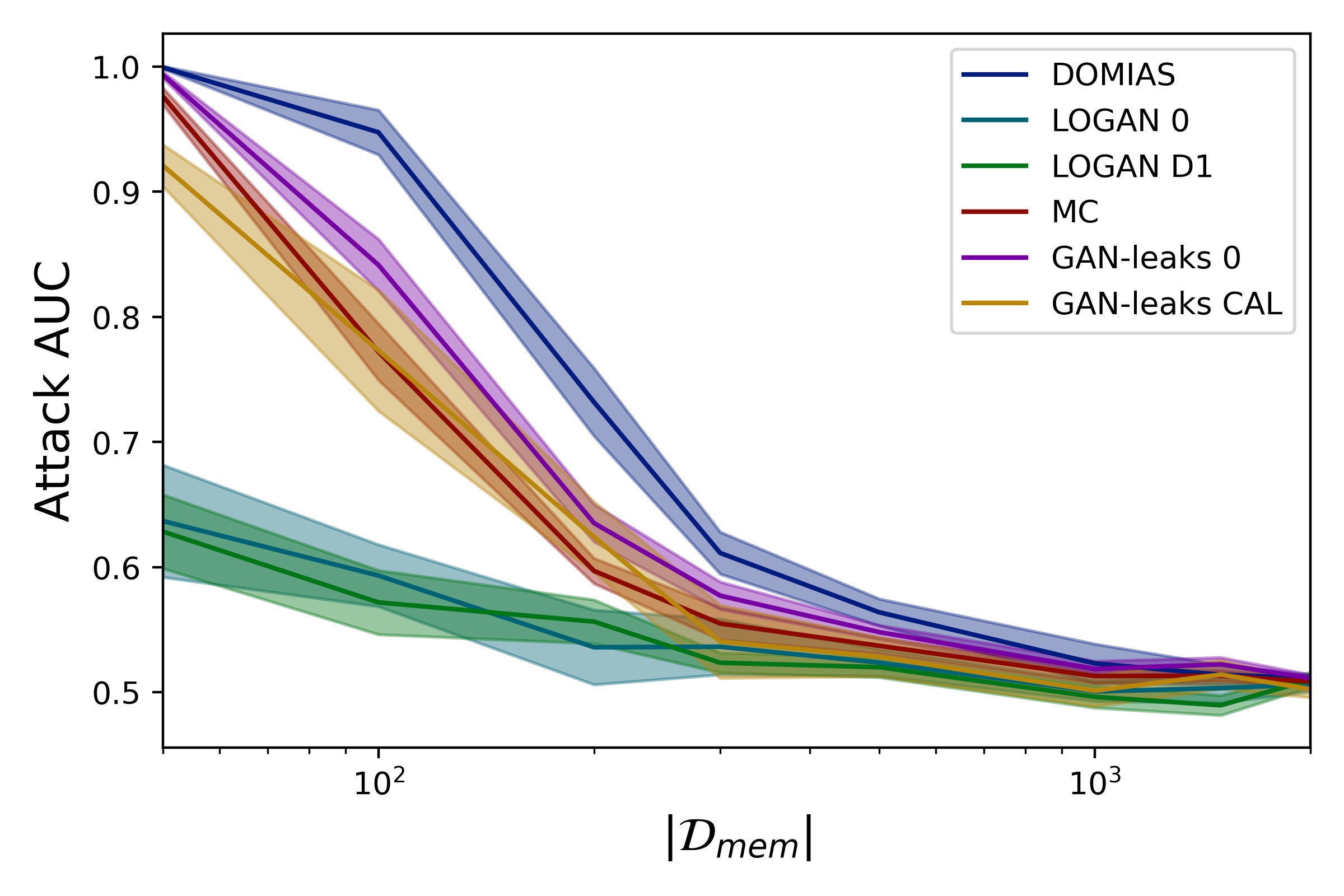}
    %\caption{}%MIA($n_{M}$)}
    \end{subfigure}
    \hfill
    \begin{subfigure}{0.48\textwidth}
    \centering
    \includegraphics[width=0.9\textwidth]{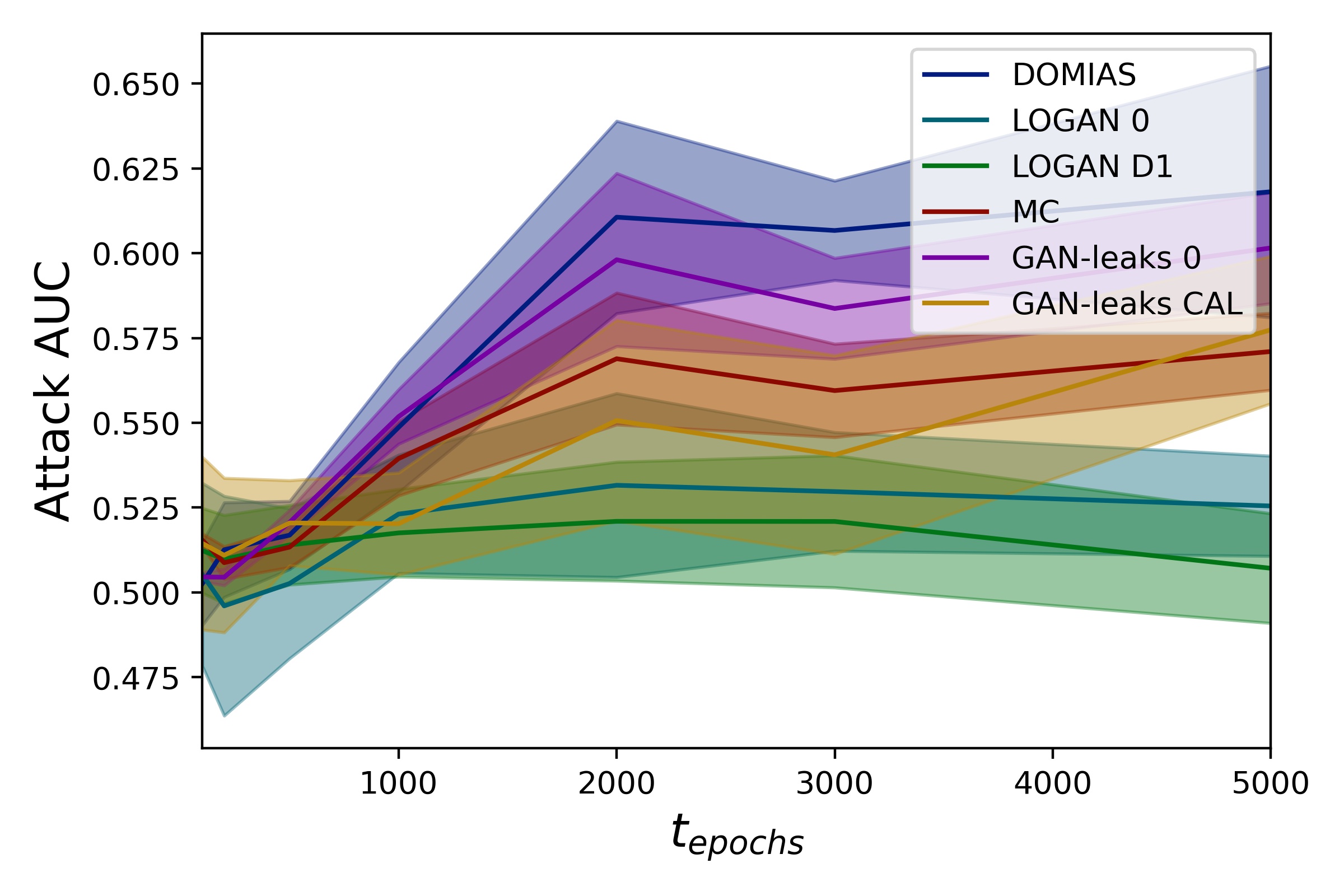}
    %\caption{}%MIA($t_{epochs}$)}
    \end{subfigure}
    \caption{\textit{DOMIAS outperforms baselines on Heart Failure dataset.} MIA performance of DOMIAS and baselines versus the generative model training set size $|\cD_{mem}|$ and training time $t_{epochs}$, evaluated on Heart Failure datasets. The same trends are observed as in Section 5.1.}
    \label{fig:domias_vs_baselines_heart_failure}
\vspace{-0.25cm}
\end{figure*}

\begin{figure*}[hbt]
%\captionsetup{font=small}
    \centering
    \begin{subfigure}{0.48\textwidth}
    \includegraphics[width=0.9\textwidth]{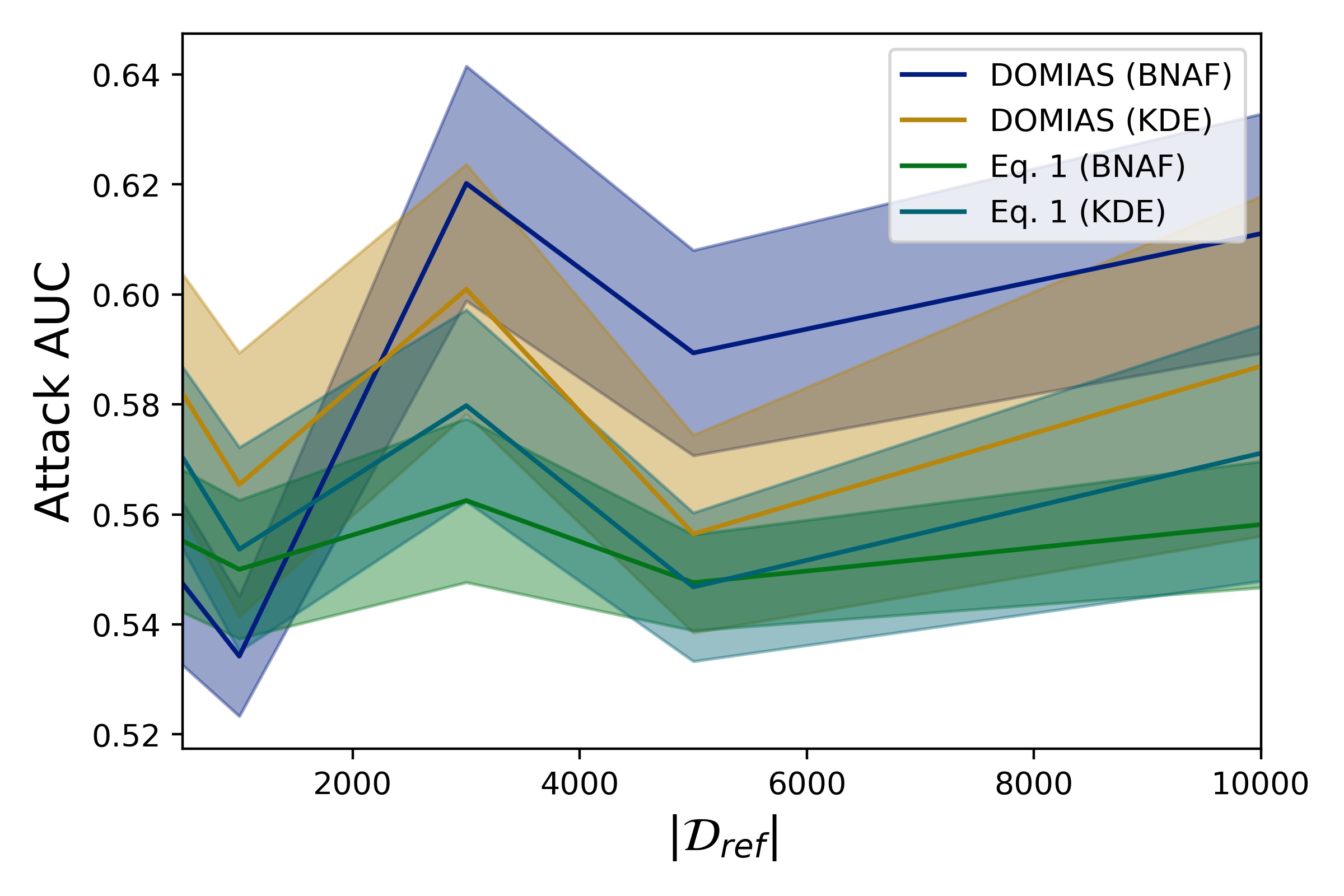}
    %\caption{}%MIA($n_{M}$)}
    \end{subfigure}
    \hfill
    \begin{subfigure}{0.48\textwidth}
    \centering
    \includegraphics[width=0.9\textwidth]{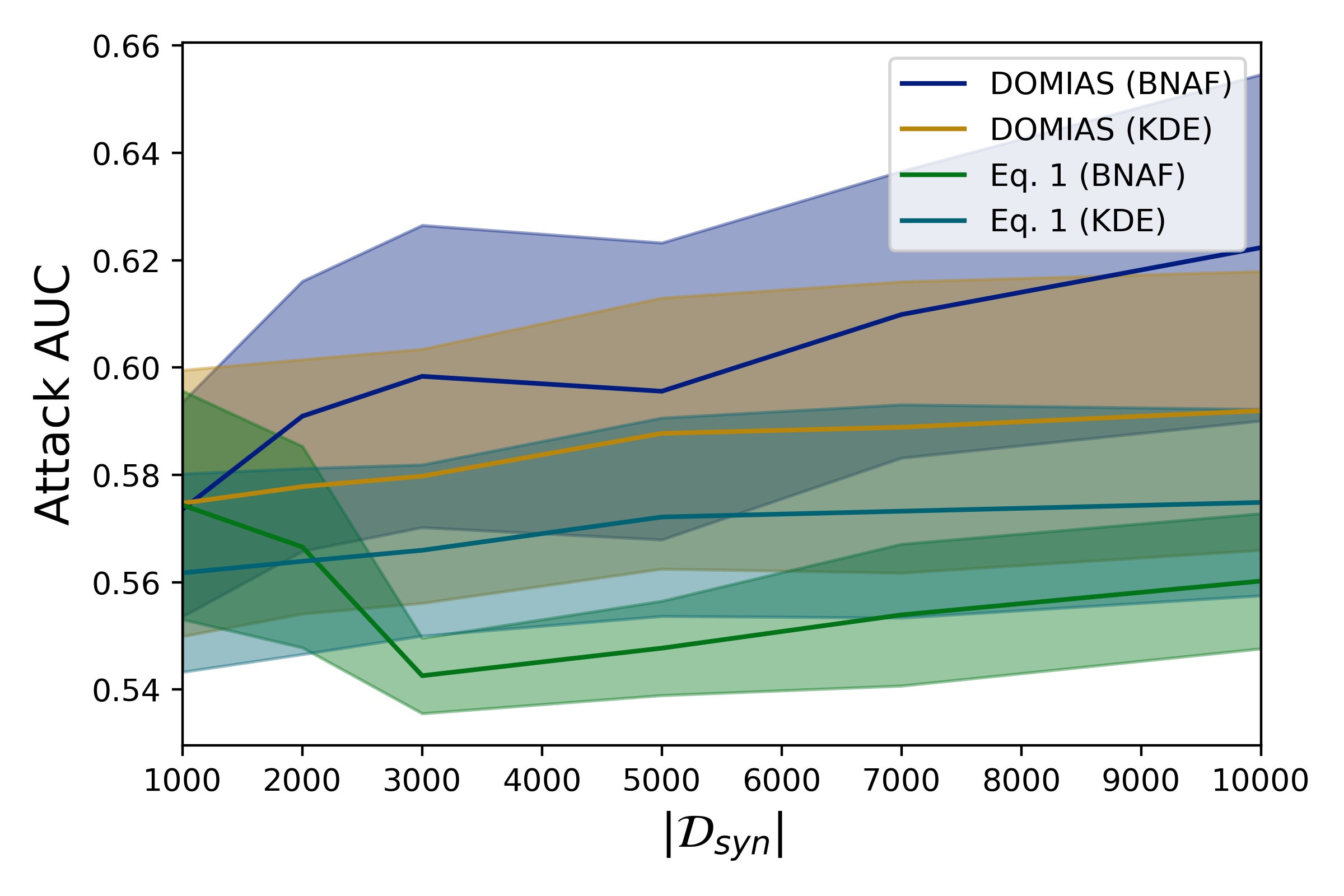}
    %\caption{}%MIA($t_{epochs}$)}
    \end{subfigure}
    \caption{\emph{DOMIAS source of gain.} Ablation study of DOMIAS on Heart Failure dataset, with attack performance as a function of the reference dataset size (left) and the synthetic dataset size (right). Similar to Section 5.2,  we see that the MIA performance of DOMIAS is largely due to assumption Eq.2 vs Eq. 1, i.e. the value of the reference dataset.}
    \label{fig:ablation_results_heart_failure}
\vspace{-0.25cm}
\end{figure*}

\subsection{Experiment 5.4: Results other attackers}
In Figure \ref{fig:other_attackers_against_gen} we include the results of experiment 5.4 for all attacks, including error bars. Indeed, we see that DOMIAS outperforms all baselines against most generative models. This motivates using DOMIAS for quantifying worst-case MIA vulnerability.

\begin{figure*}[hbt]
    \centering
    \includegraphics[width=\textwidth]{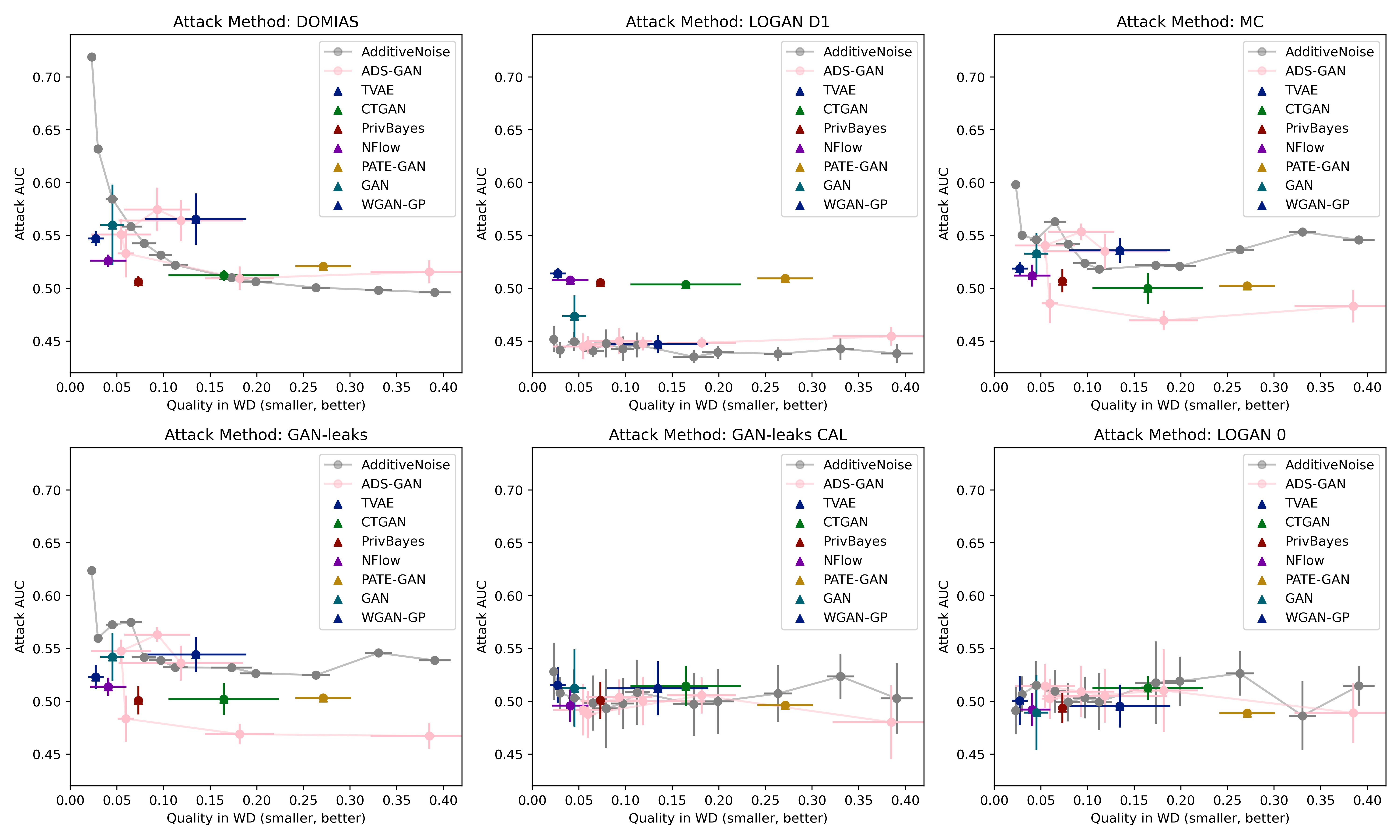}
    \caption{DOMIAS consistently outperforms baseline attackers at attacking the different generative models.}
    \label{fig:other_attackers_against_gen}
\end{figure*}

\subsection{CelebA image data} \label{sec:celeba}
We include additional results for membership inference attacks against the image dataset CelebA. Results indicate DOMIAS is significantly better at attacking this high-dimensional data than baseline methods.

\paragraph{Set-up} We use \href{https://mmlab.ie.cuhk.edu.hk/projects/CelebA.html}{CelebA}~\citep{Liu2015DeepWild}, a large-scale face attributes dataset with more than 200K celebrity images. We generate a synthetic dataset with 10k examples using a convolutional VAE with a training set containing the first 1k examples, and use the following 1k examples as test set. Then the following 10k examples are used as reference dataset. As training the BNAF density estimator is computational expensive (especially when using deeper models), we conduct dimensionality reduction with a convolutional auto-encoder with $128$ hidden units in the latent representation space (i.e. output of the encoder) and apply BNAF in such a representation space. The hyper-parameters and network details we use in VAE are listed in Table~\ref{tab:hyper-param-vae} and Table \ref{tab:vae_architecture}.

\begin{table}[hbt]
    \centering
    
    \caption{Hyperparameters for VAE}
    \label{tab:hyper-param-vae}
    \begin{tabular}{c|c}
    \toprule
        batch size & 128 \cr
        n-layer & 5 \cr
        Optimizer & Adam \cr
        learning rate &$0.002$ \cr
    \bottomrule
    \end{tabular}
\end{table}

\begin{table*}
    \centering
    \caption{Architecture of VAE}
    \label{tab:vae_architecture}
    \begin{subtable}[t]{0.48\textwidth}
    %\small
        \caption{Network Structure for Encoder}
    \label{tab:stru-encoder}
    \begin{tabular}{c|c}
    \toprule
        Layer & Params (PyTorch-Style) \cr
        \hline
        Conv1 & $(3,64,4,2,1)$ \cr
        ReLU & $\cdot$\cr
        Conv2 & $(64,128,4,2,1)$ \cr
        ReLU & $\cdot$\cr
        Conv3 & $(128,256,4,2,1)$ \cr
        ReLU & $\cdot$\cr
        Conv4 & $(256,256,4,2,1)$ \cr
        ReLU &$\cdot$ \cr
        Linear1 & $(256*4*4,256)$ \cr
        ReLU & $\cdot$\cr
        Linear2 & $(256,256)$ \cr
        ReLU & $\cdot$\cr
        Linear3 & $(256,128*2)$ \cr
    \bottomrule
    \end{tabular}
    \end{subtable}
    \hspace{\fill}
    \begin{subtable}[t]{0.48\textwidth}
    %\small
    \caption{Network Structure for Decoder}
    \label{tab:stru-decoder}
    \begin{tabular}{c|c}
    \toprule
        Layer & Params (PyTorch-Style) \cr
        \hline
        Linear1 & $(128,256)$ \cr
        ReLU & $\cdot$\cr
        Linear2 & $(256,256)$ \cr
        ReLU & $\cdot$\cr
        Linear3 & $(256,256*4*4)$ \cr
        ReLU & $\cdot$\cr
        ConvTranspose1 & $(256,256,4,2,1)$ \cr
        ReLU & $\cdot$\cr
        ConvTranspose2 & $(256,128,4,2,1)$ \cr
        ReLU & $\cdot$\cr
        ConvTranspose3 & $(128,64,4,2,1)$ \cr
        ReLU & $\cdot$\cr
        ConvTranspose4 & $(64,3,4,2,1)$ \cr
        Tanh & $\cdot$\cr
    \bottomrule
    \end{tabular}
    \end{subtable}
\end{table*}

\paragraph{Results}
Figure \ref{fig:celeba} includes the attacking AUC of DOMIAS and baselines of 8 runs. DOMIAS consistently outperforms other MIA methods, most of which score not much better than random guessing. These methods fail to attack the 128-dimensional representations of the data (originally $64\times 64$ pixel images), due to most of them using nearest neighbour or KDE-based approaches. On the other hand, DOMIAS is based on the flow-based density estimator BNAF \citep{deCao2019BlockFlow}, which is a deeper model that is more apt at handling the high-dimensional data.
\begin{figure*}[hbt]
    \centering
    \includegraphics[width=0.9\textwidth]{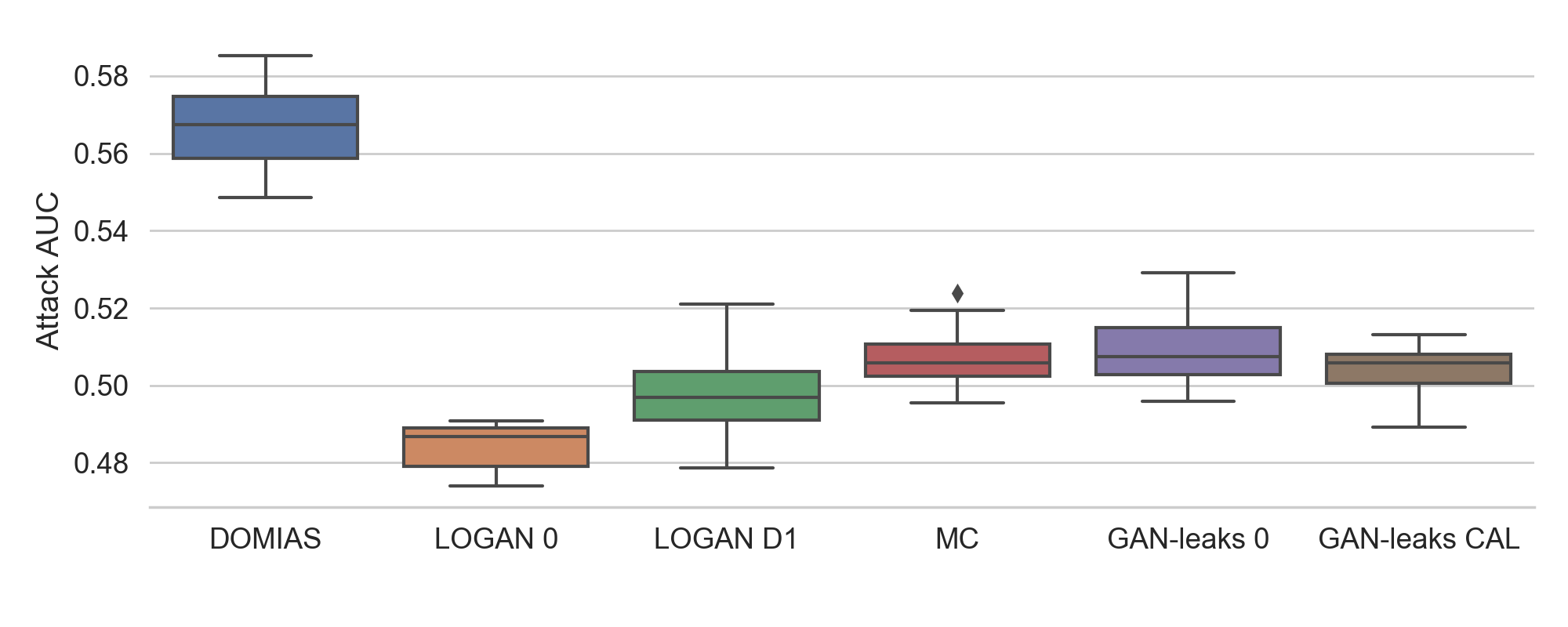}
    \caption{\emph{Attacking performance on CelebA.} DOMIAS scores significantly better at attacking image data compared to baselines.}
    \label{fig:celeba}
\end{figure*}

%newpage
\section{HIGH-LEVEL PRIOR KNOWLEDGE} \label{appx:gaussian_prior}
If we have no reference data at all, we can still perform more successful attacks compared to baselines if we have high-level statistics of the underlying distribution. Effectively, any informed prior can improve upon methods that use Eq. \ref{eq:assumption_prev}; this being a special case of Eq. \ref{eq:assumption_domias}, where one assumes a uniform prior on $p_R$. In this Appendix, we use the Housing dataset and we assume that we only know the mean and standard deviation of the first variable, median income. This is a very realistic setting in practice, since an adversary can relatively easily acquire population statistics for individual features. We subsequently model the reference dataset distribution $p_{ref}$ as a normal distribution of only the age higher-level statistics---i.e. not making any assumptions on any of the other variables, implicitly putting a uniform prior on these when modelling $p_{ref}$. Otherwise, we use the same training settings as in Experiment 5.1 (left panel Figure 3). In Figure \ref{fig:high-level statistics}. We see that even with this minimal assumption, we still outperform its ablated versions. These results indicate that a relatively weak prior on the underlying distribution without any reference data, can still provide a relatively good attacker model.

\begin{figure}[hbt]
    \centering
    \includegraphics[width=0.8\columnwidth]{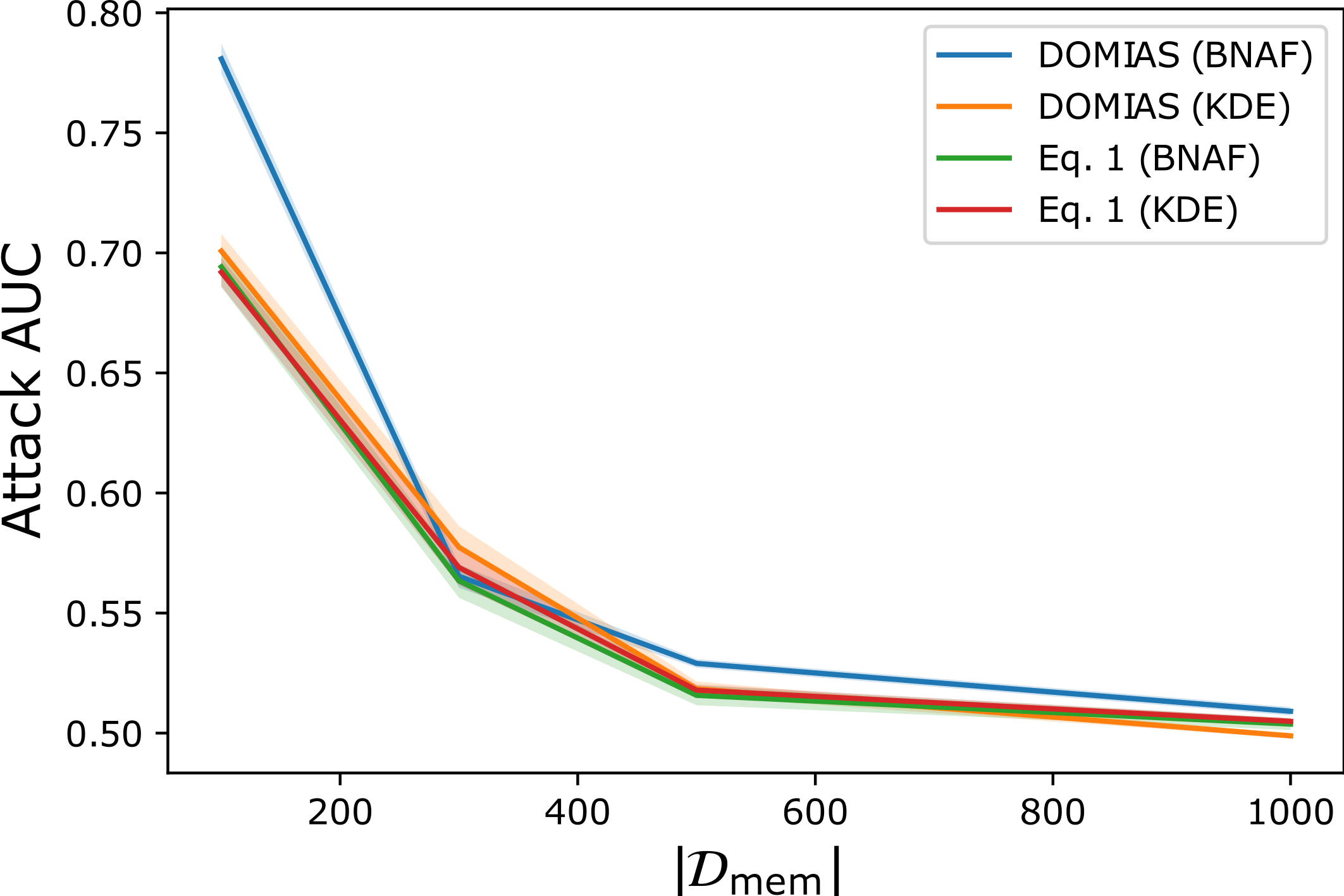}
    \caption{\textit{Using DOMIAS with no reference data but high-level statistics of the underlying data.} Using just the mean and standard deviation of the population's median income, DOMIAS outperforms its ablated counterparts that are based on Eq. \ref{eq:assumption_prev}. }
    \label{fig:high-level statistics}
\end{figure}

\section{HIGH-PRECISION ATTACKS} \label{appx:high_precision attacks}
\citet{Hu2021MembershipRegions} focus on high-precision membership attacks, i.e. can we attack a small set of samples with high certainty. This is an interesting question, since the risk of high-precision attacks may be hidden if one only looks at overall attacking performance. Their work is not applicable to our setting, e.g. they assume full generator and discriminator access. In this section, we show that even in the full black-box setting high-precision MIAs are a serious risk.

\subsection{Tabular data}

\paragraph{Set-up} We assume the same dataset and generative model set-up as in Section 5.3. We study which samples the different methods give the highest score, i.e. mark as most likely to be in $\cD_{mem}$. Let $\cD_{test}$ be a test set consisting for 50\%  of samples $x^i$ in $\cD_{mem}$ and 50\% samples not in $\cD_{mem}$, respectively denoted by $m=1$ and $m=0$. Let $\hat{m} = A(x)$ be the attacker's prediction, and let $S(A, \cD_{test},q) = \{x\in\cD_{test}|\hat{m}>Quantile(\{\hat{m}^i\}_i,1-q)\}$ be the set of samples that are given the $q$-quantile's highest score by attacker $A$. We are interested in the mean membership of this set, i.e. the precision if threshold $Quantile(\{\hat{m}^i|x^i\in\cD_{test}\},1-q)$ is chosen. We include results for DOMIAS and all baselines. Results are averaged over 8 runs.

\paragraph{Results} In Figure \ref{fig:high-precision} we plot the top-score precision-quantile curve for each method for each MIA method, i.e. $P(A, \cD_{test}, q) = \text{mean}(\{m|x\in S(A, \cD_{test}, q)\})$ as a function of $q$. These figures show the accuracy of a high-precision attacker, if this attacker would choose to attack only the top $q$-quantile of samples. We see that unlike other methods, the precision of DOMIAS goes down almost linearly and more gradually. Though MC and GAN-Leaks are able to find the most overfitted examples, they do not find all---resulting from their flawed underlying assumption Eq. 1 that prohibits them from finding overfitted examples in low-density regions.

\begin{figure*}[hbt]
%\captionsetup{font=small}
    \centering
    \begin{subfigure}{0.33\textwidth}
    \centering
    \includegraphics[width=\textwidth]{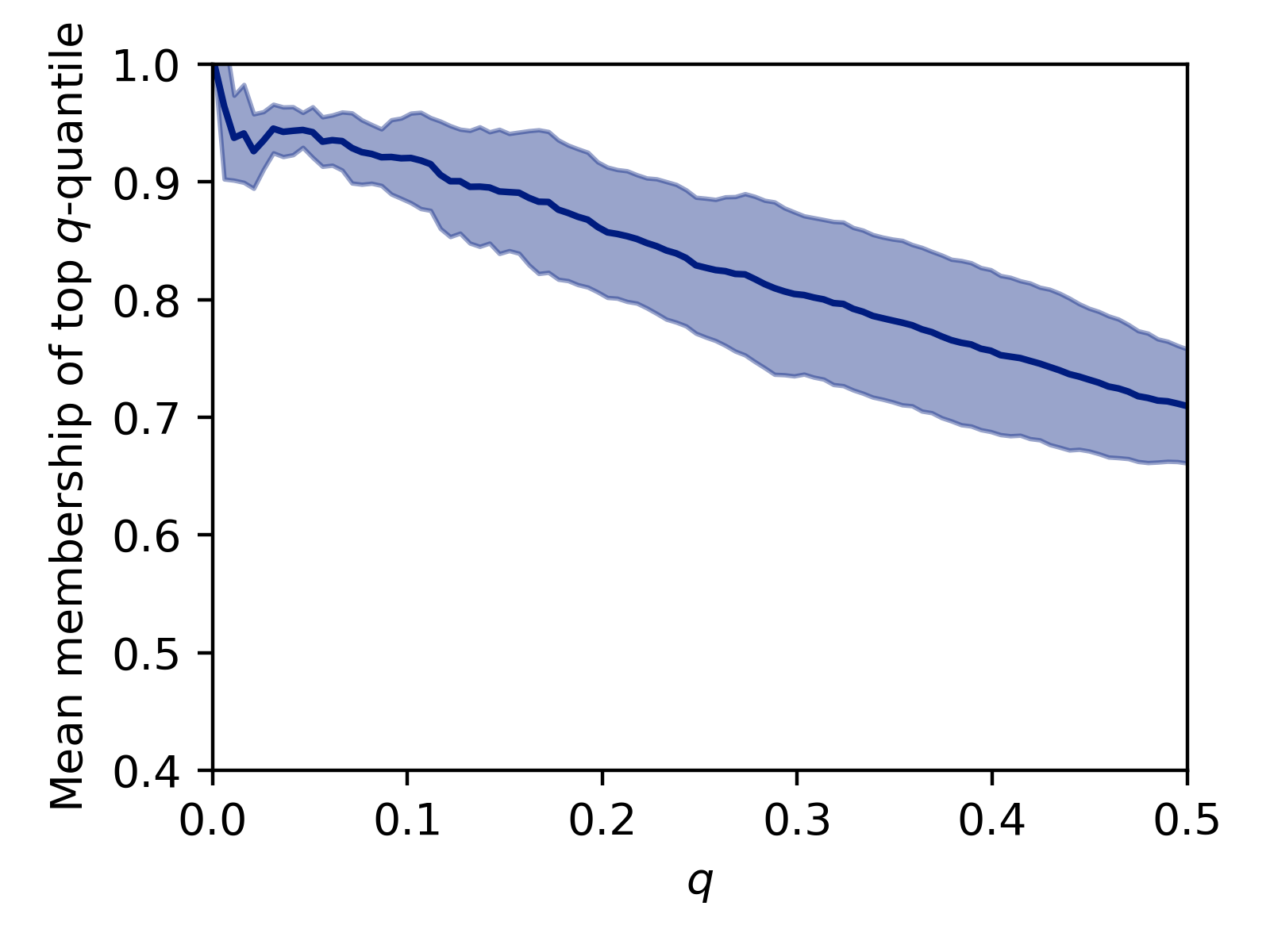}
    \caption{DOMIAS}
    \end{subfigure}
    \begin{subfigure}{0.33\textwidth}
    \centering
    \includegraphics[width=\textwidth]{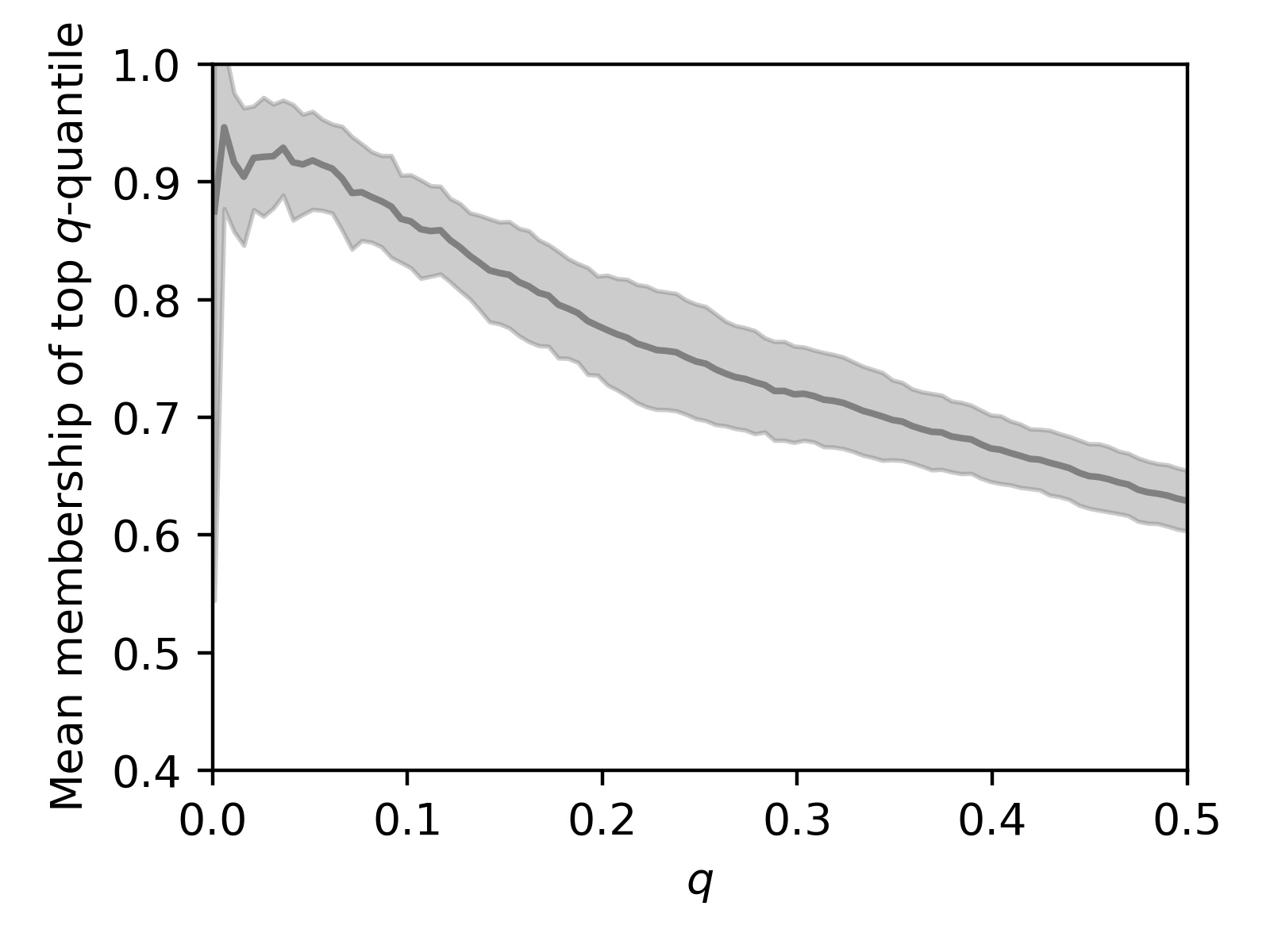}
    \caption{Eq. 1 (BNAF)}
    \end{subfigure}
    \begin{subfigure}{0.33\textwidth}
    \centering
    \includegraphics[width=\textwidth]{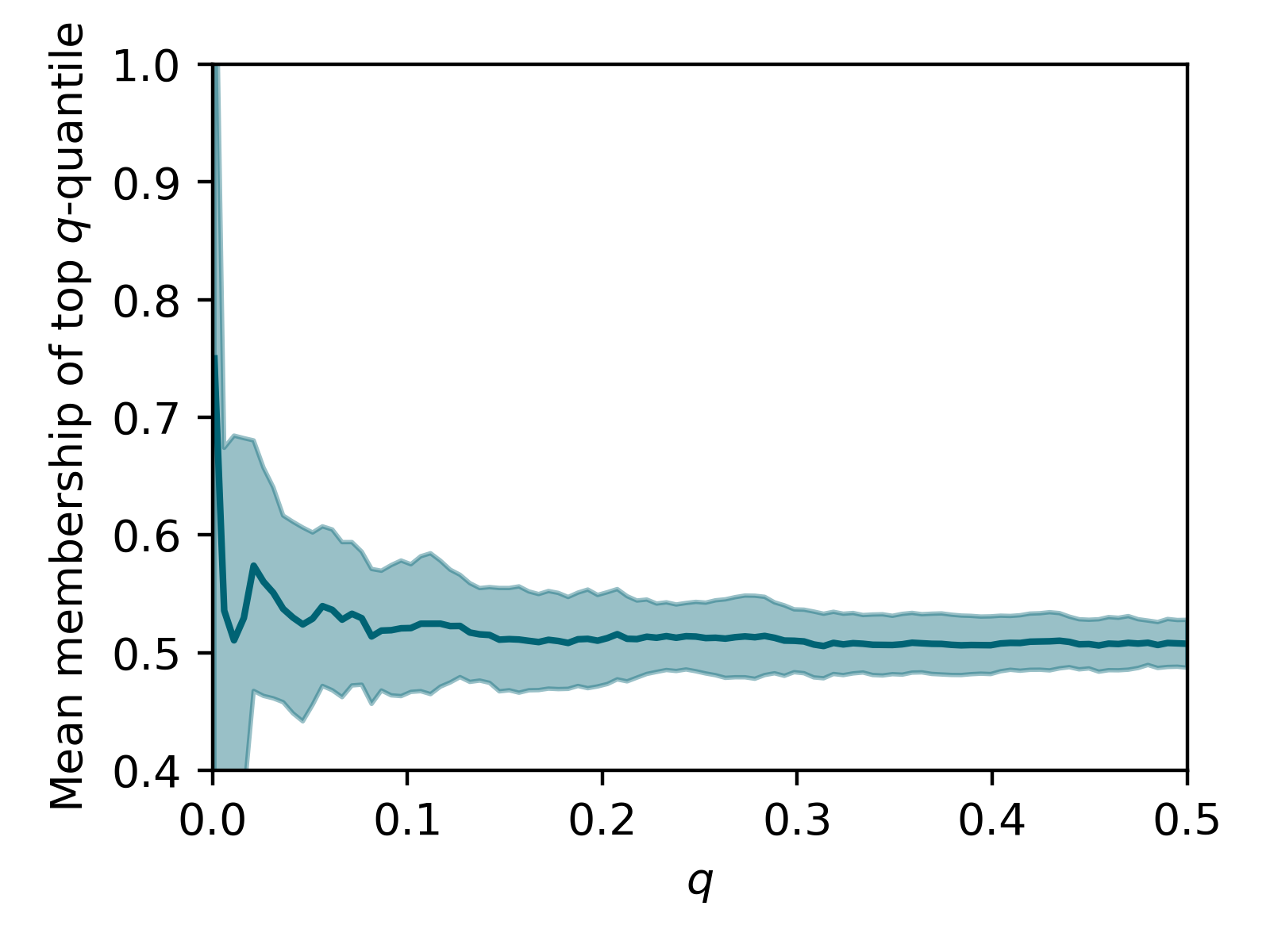}
    \caption{LOGAN 0}
    \end{subfigure}
    \begin{subfigure}{0.33\textwidth}
    \centering
    \includegraphics[width=\textwidth]{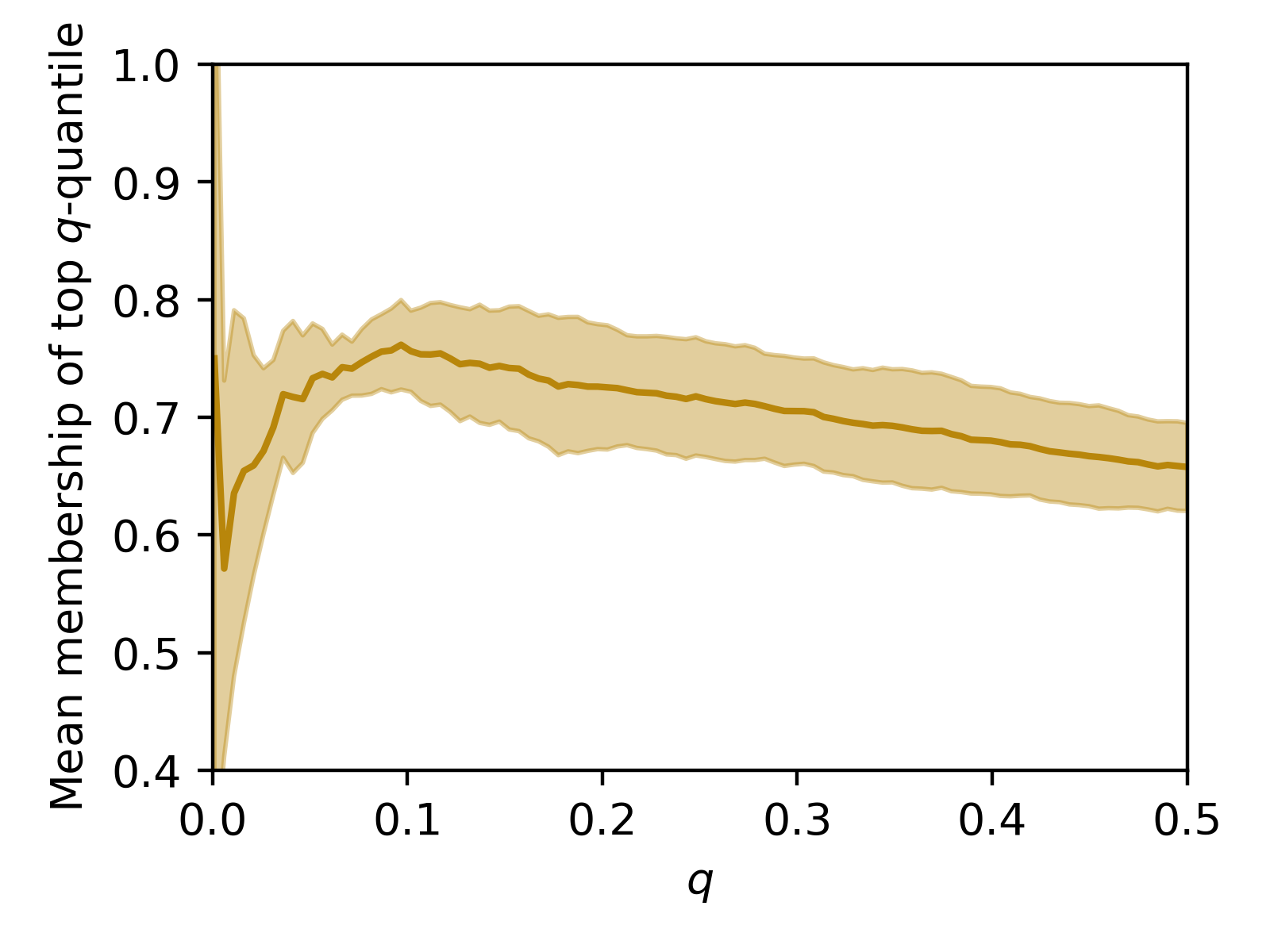}
    \caption{LOGAN D1}
    \end{subfigure} 
    \begin{subfigure}{0.33\textwidth}
    \centering
    \includegraphics[width=\textwidth]{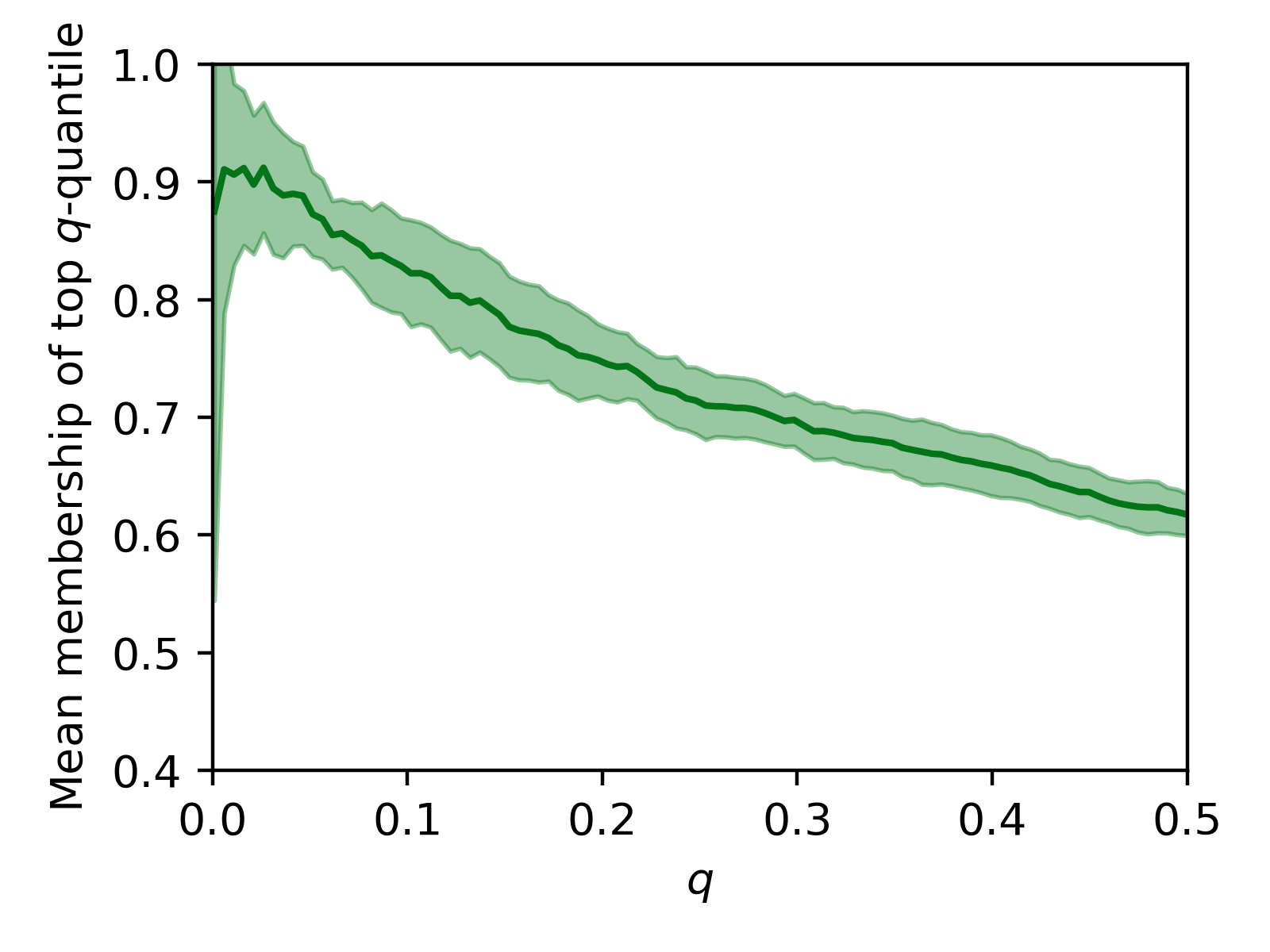}
    \caption{GAN-leaks 0}
    \end{subfigure}
    \begin{subfigure}{0.33\textwidth}
    \centering
    \includegraphics[width=\textwidth]{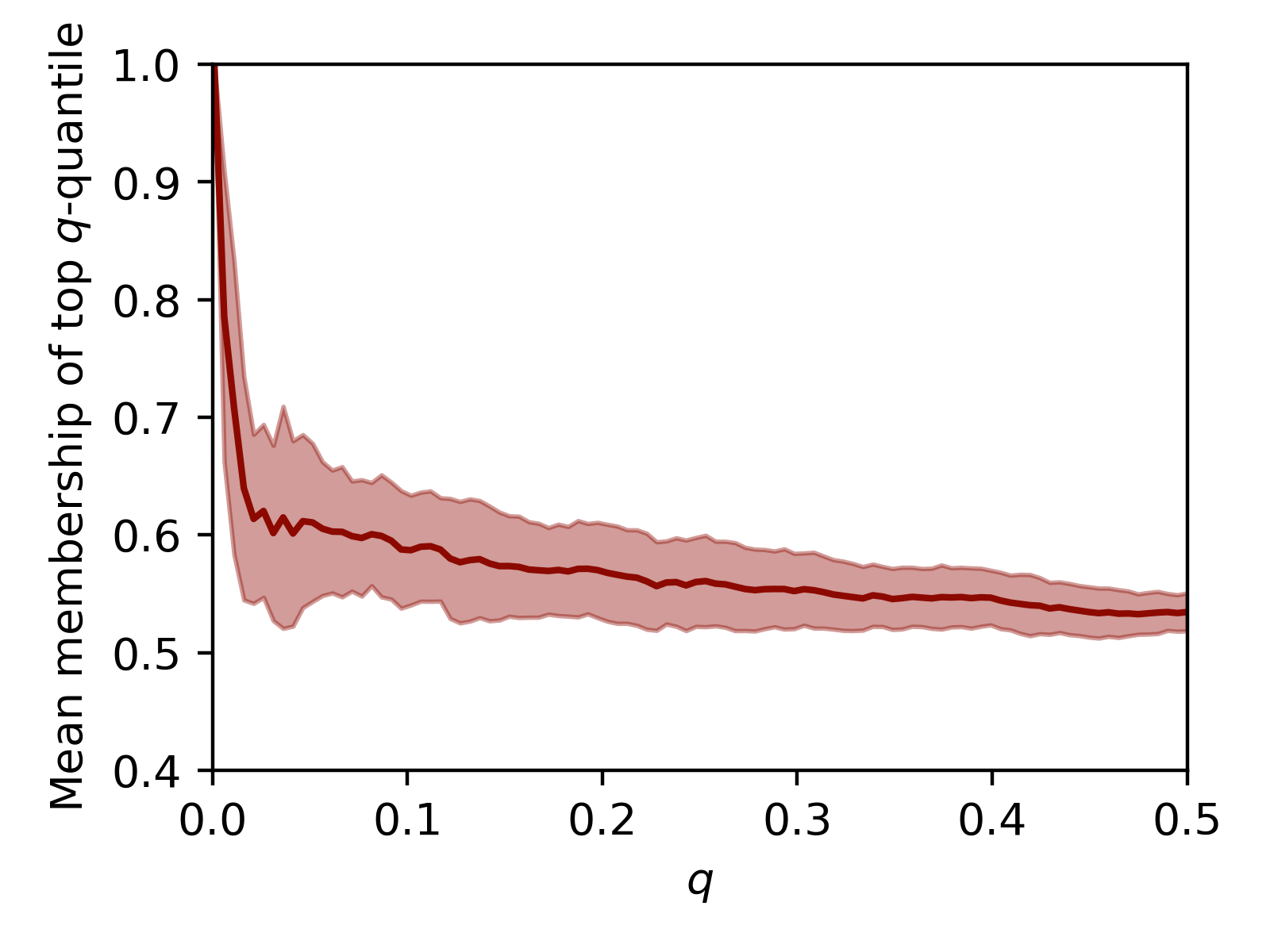}
    \caption{GAN-leaks CAL}
    \end{subfigure}
    \begin{subfigure}{0.33\textwidth}
    \centering
    \includegraphics[width=\textwidth]{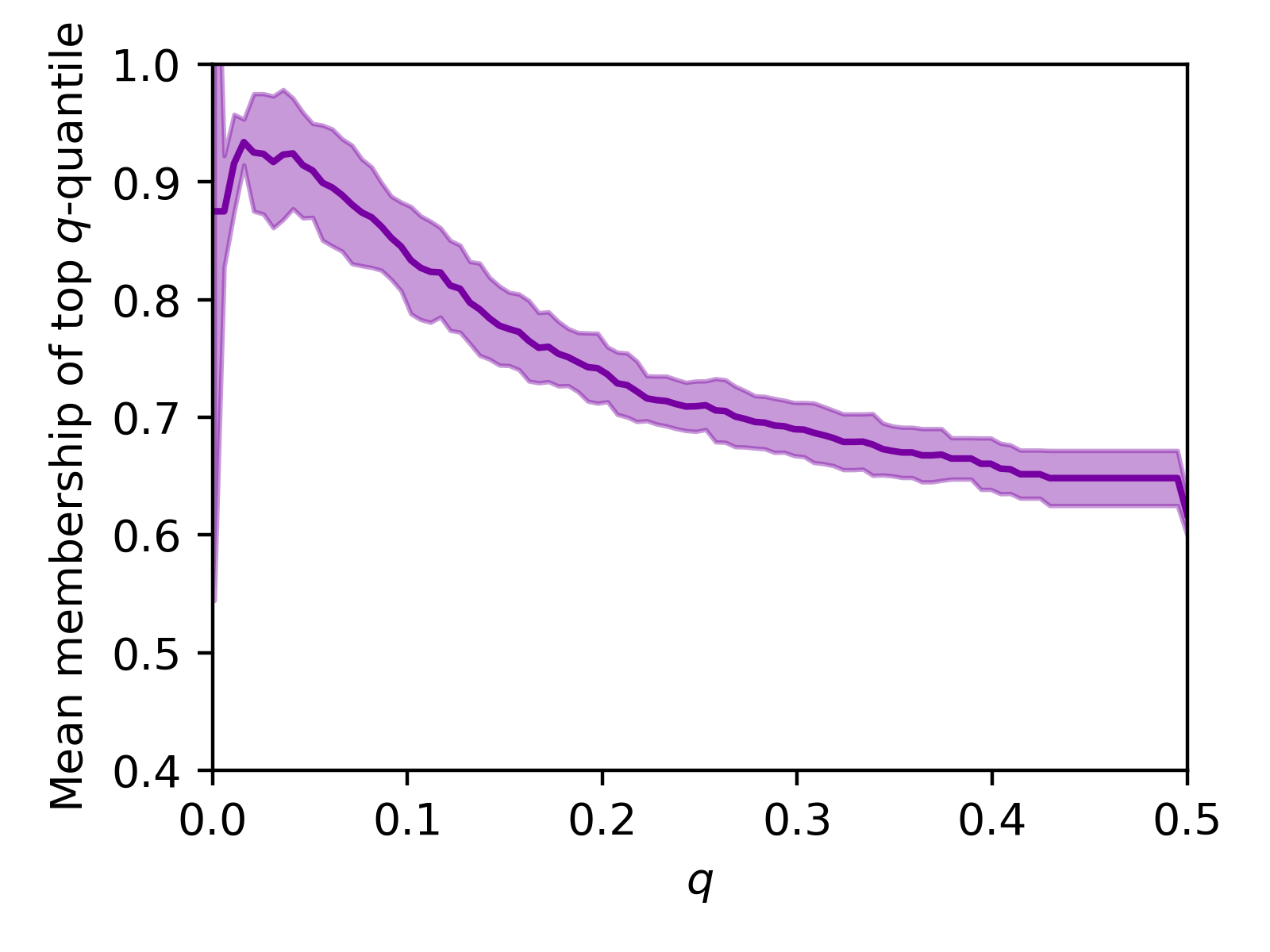}
    \caption{MC}
    \end{subfigure}
    \caption{\emph{DOMIAS is better at high-precision attacks than baselines on heart failure dataset.} Plotting the top-quantile precision $P(A, \cD_{test}, q)$ versus $q$. For example, if the attacker decides to attack only the $20\%$ highest samples, we get DOMIAS is significantly more precise ($86.2\pm 5.5\%$) compared to baselines---LOGAN D0 ($51.0\pm 3.9\%$), LOGAN D1 ($72.6\pm 5.3\%$), MC ($74.2\pm 3.0\%$), GAN-leaks ($74.9\pm 3.1\%$), GAN-Leaks CAL ($57.0\pm 4.1\%$). Additionally included is Eq. 1 (BNAF), the ablation attacker that does not make use of the reference data. We see that the reference data helps DOMIAS attack a a larger group with high precision.}
    \label{fig:high-precision}
\end{figure*}

\subsection{Image data}
Let us run the same high-precision attack on the CelebA dataset---see Appendix \ref{sec:celeba}, including settings. Again, we see that high-precision attacks are more successful when using DOMIAS, see Figure \ref{fig:high-precision-celeba}

\begin{figure*}[hbt]
%\captionsetup{font=small}
    \centering
    \begin{subfigure}{0.33\textwidth}
    \centering
    \includegraphics[width=\textwidth]{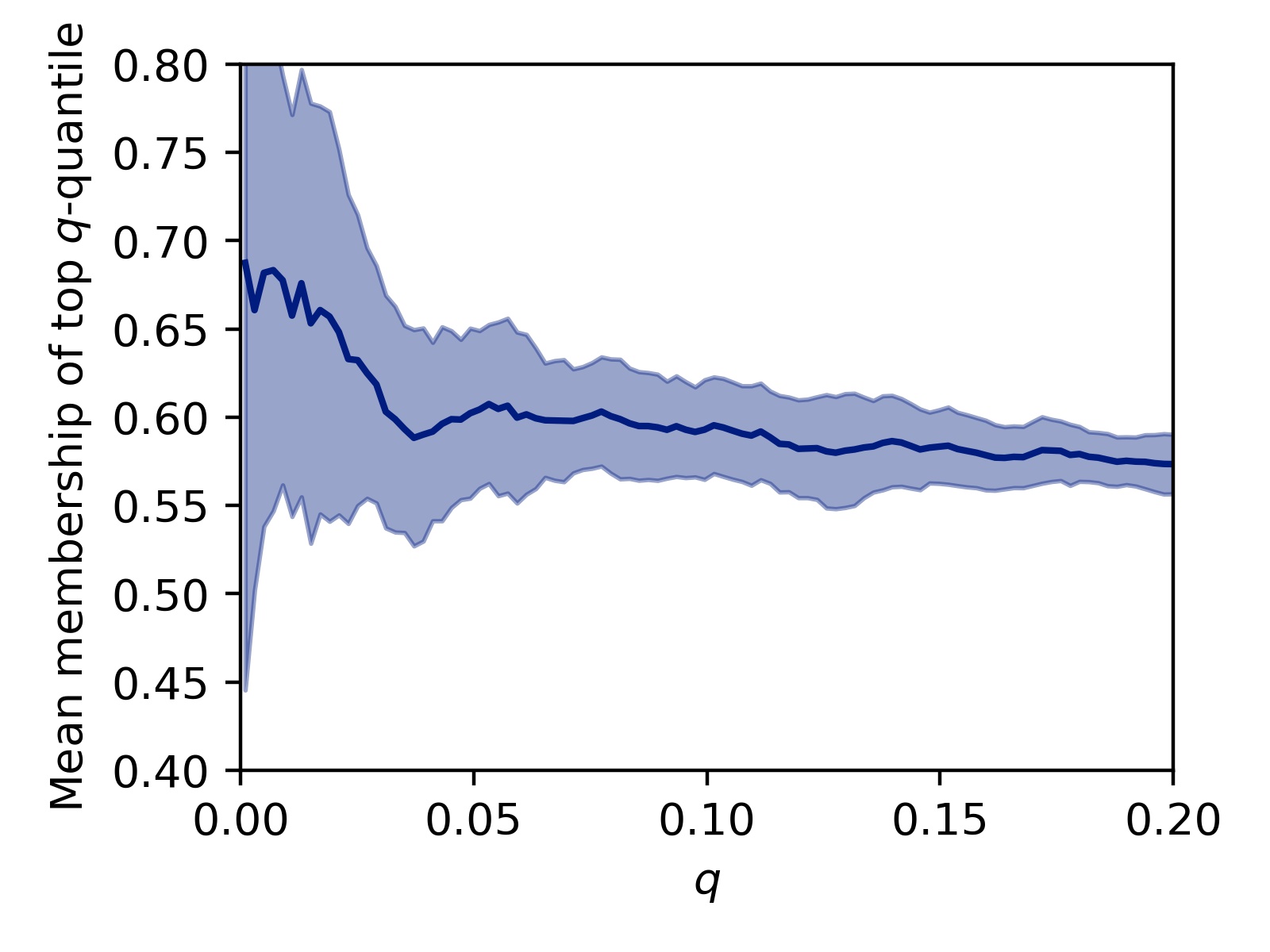}
    \caption{DOMIAS}
    \end{subfigure}
    \begin{subfigure}{0.33\textwidth}
    \centering
    \includegraphics[width=\textwidth]{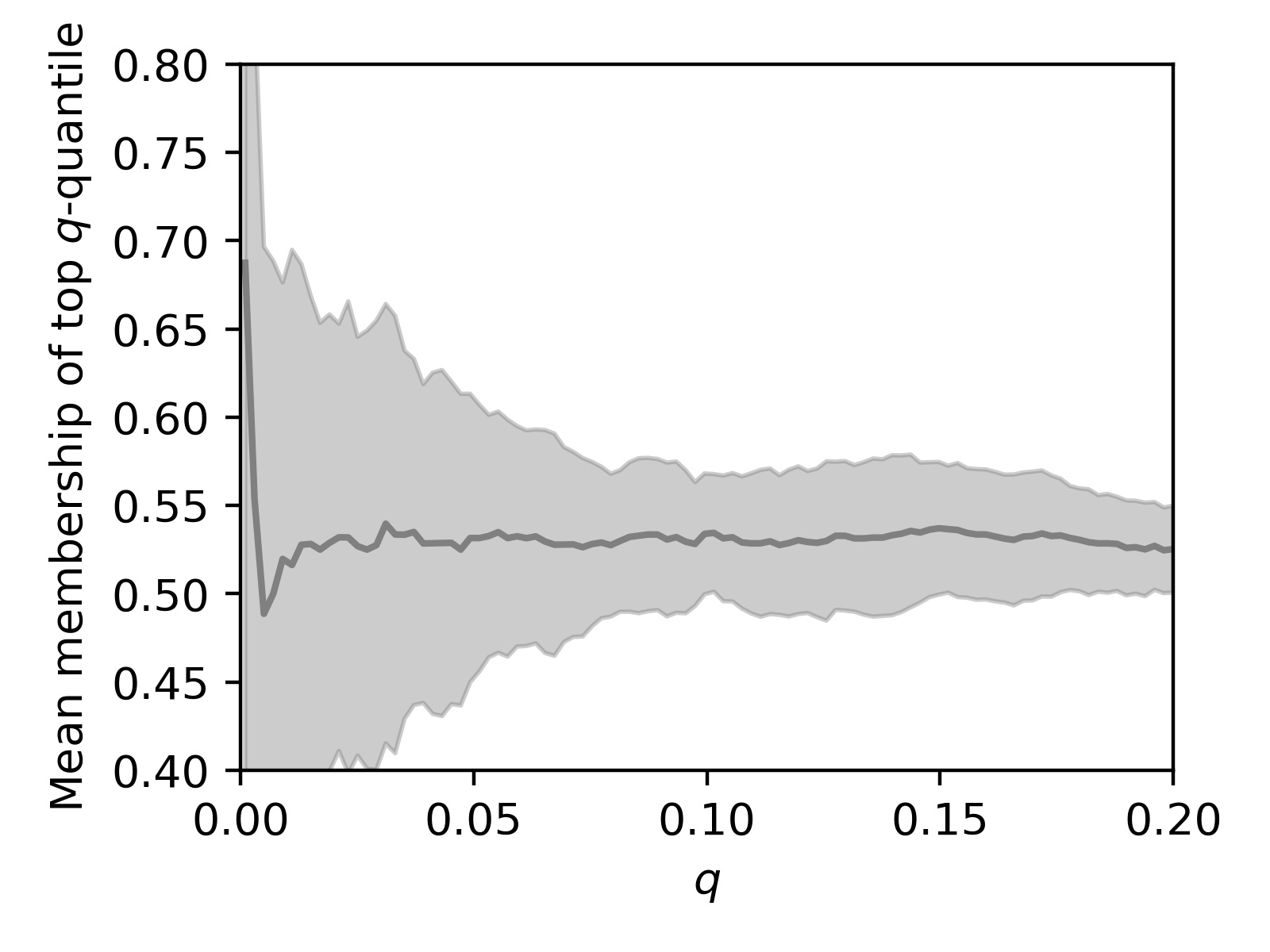}
    \caption{Eq. 1 (BNAF)}
    \end{subfigure}
    \begin{subfigure}{0.33\textwidth}
    \centering
    \includegraphics[width=\textwidth]{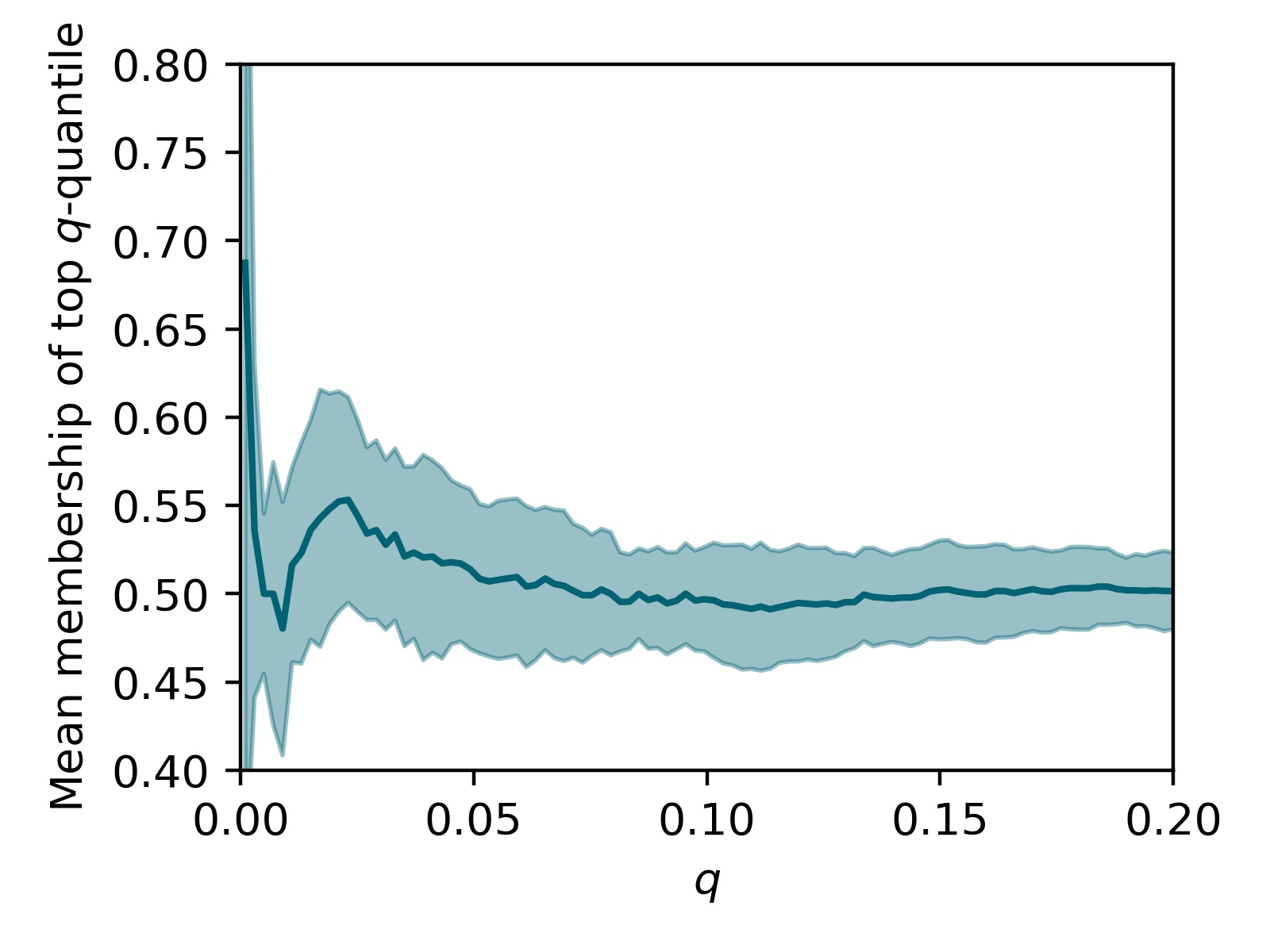}
    \caption{LOGAN 0}
    \end{subfigure}
    \begin{subfigure}{0.33\textwidth}
    \centering
    \includegraphics[width=\textwidth]{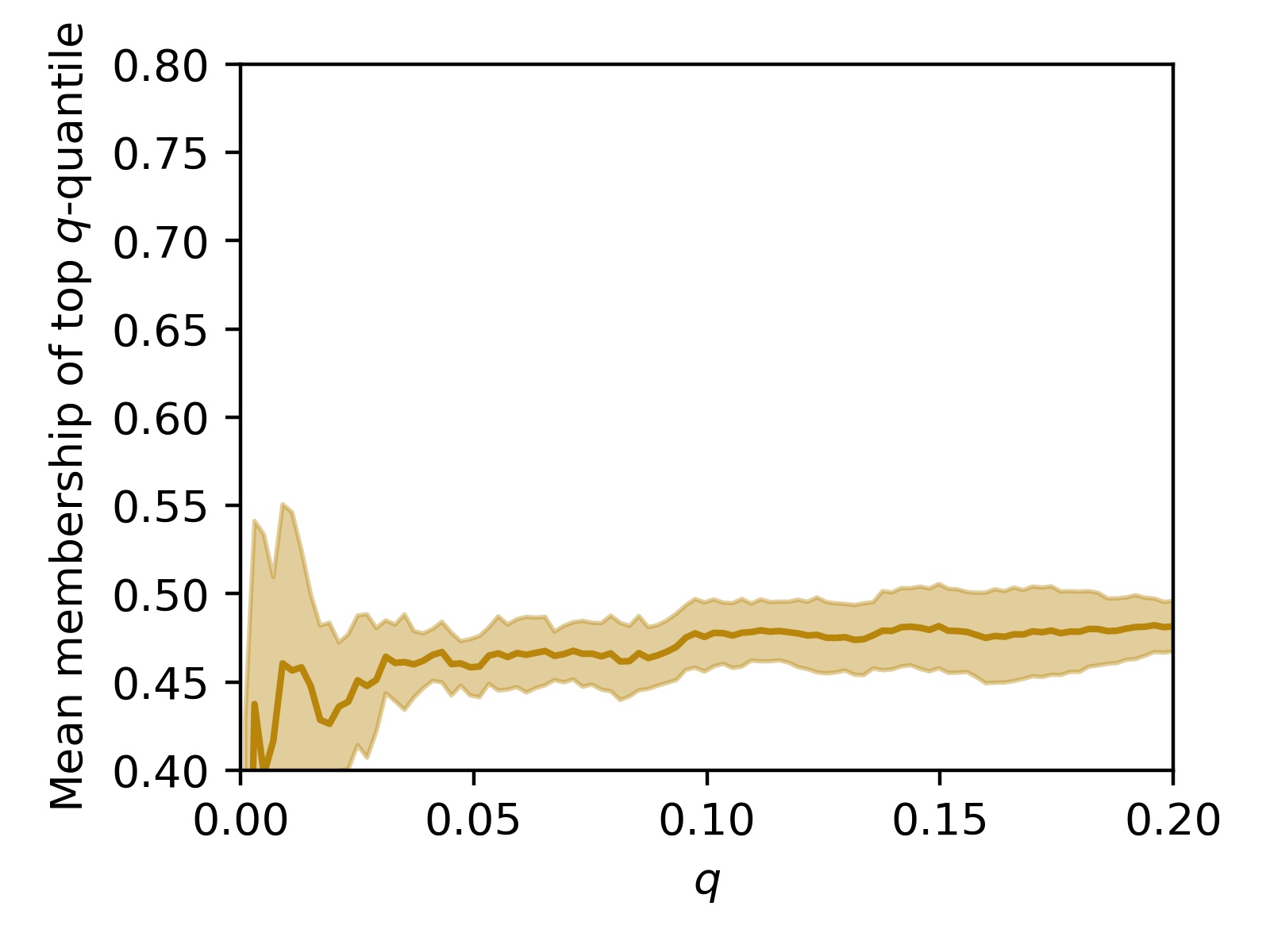}
    \caption{LOGAN D1}
    \end{subfigure} 
    \begin{subfigure}{0.33\textwidth}
    \centering
    \includegraphics[width=\textwidth]{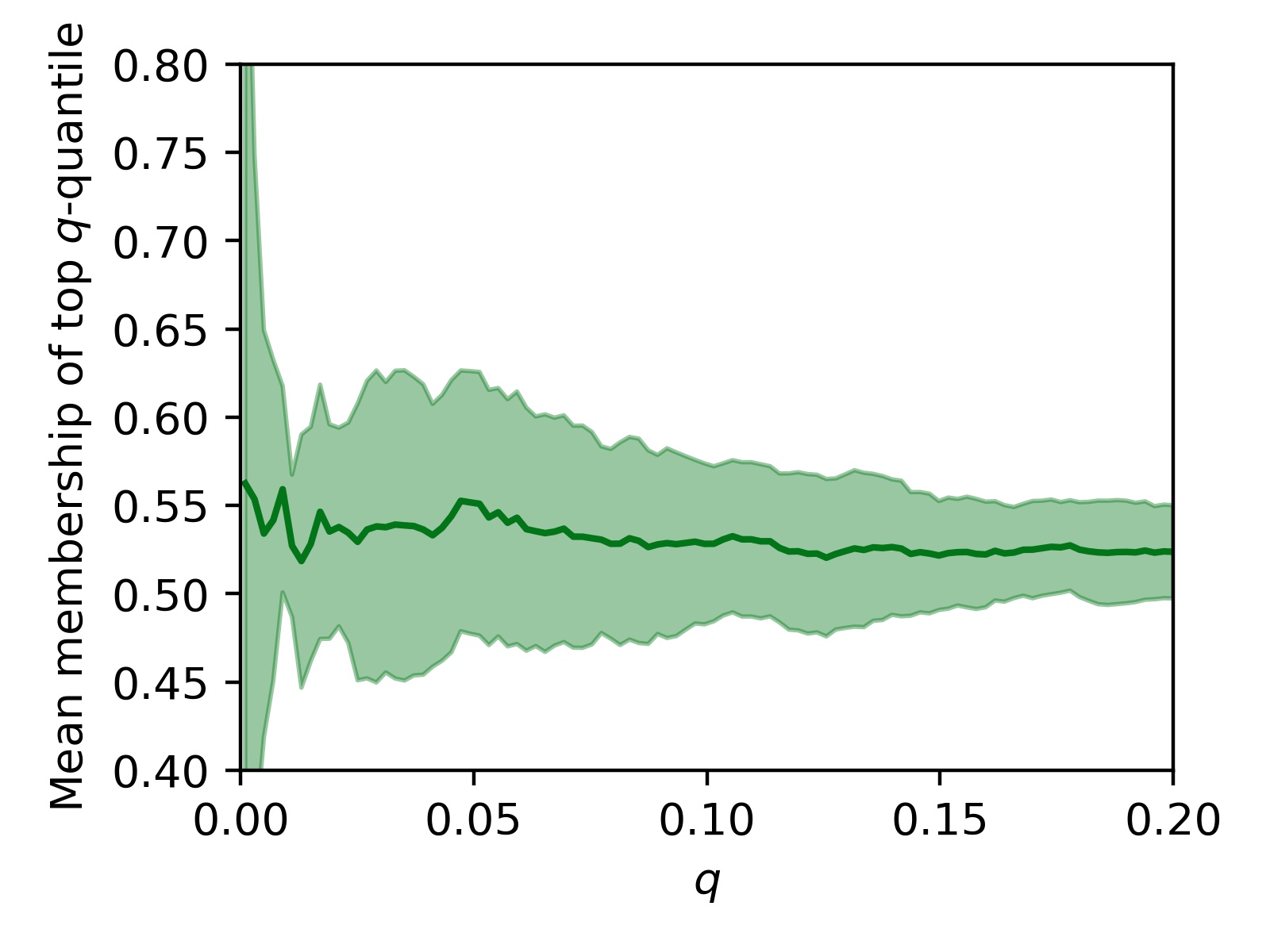}
    \caption{GAN-leaks 0}
    \end{subfigure}
    \begin{subfigure}{0.33\textwidth}
    \centering
    \includegraphics[width=\textwidth]{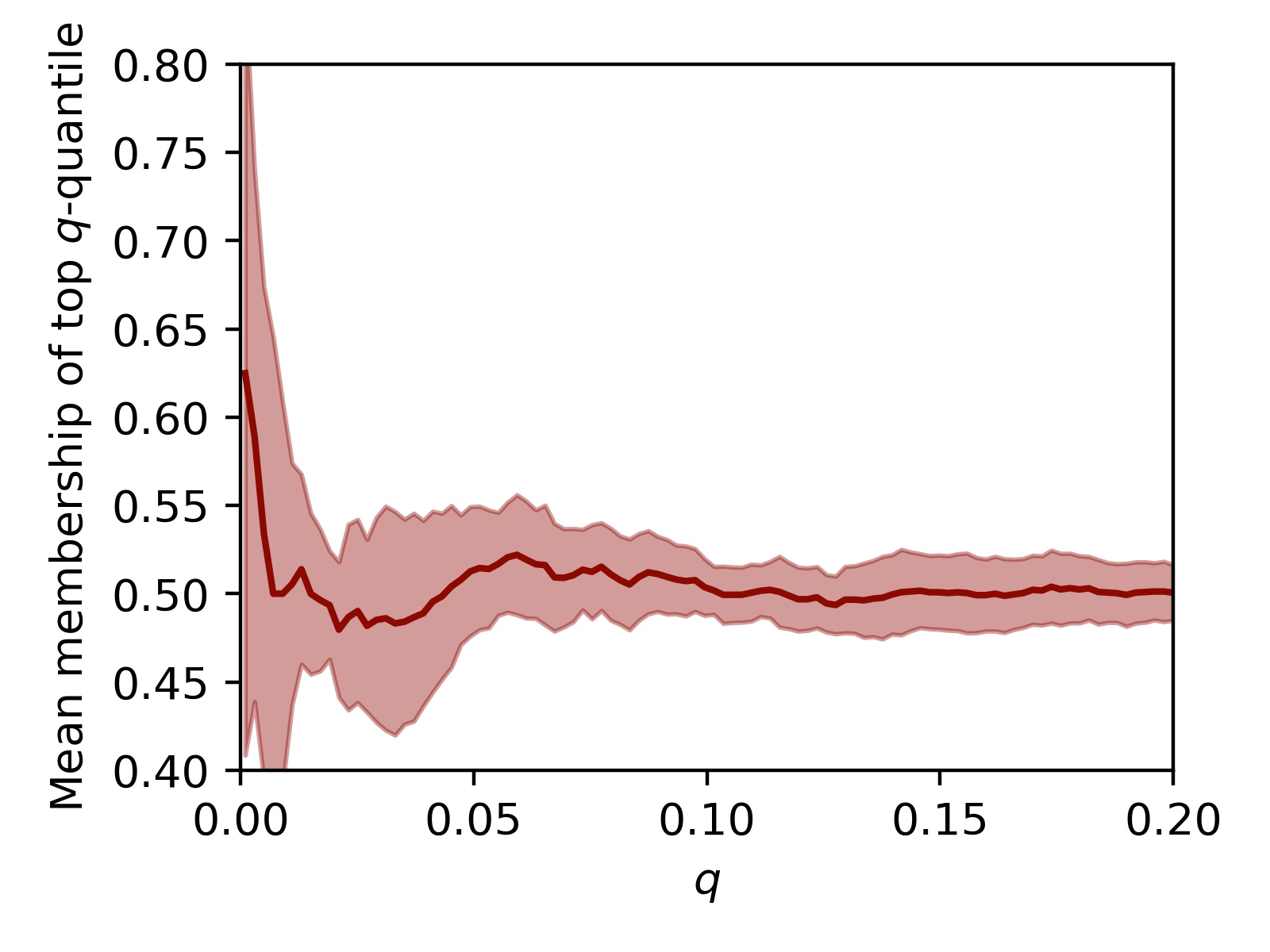}
    \caption{GAN-leaks CAL}
    \end{subfigure}
    \begin{subfigure}{0.33\textwidth}
    \centering
    \includegraphics[width=\textwidth]{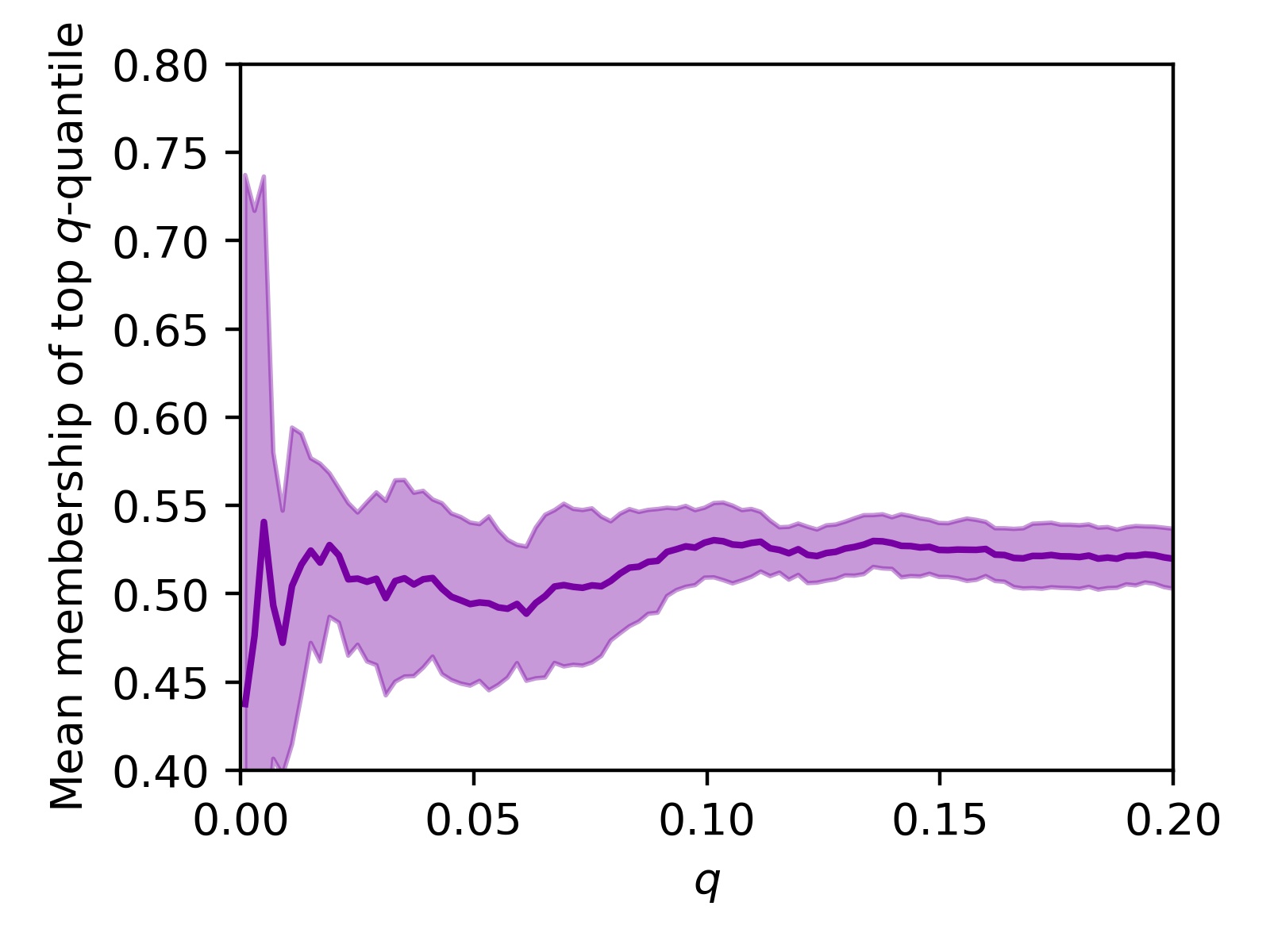}
    \caption{MC}
    \end{subfigure}
    \caption{\emph{DOMIAS is better at high-precision attacks than baselines on CelebA image data.} For example, an attacker could attack only the examples with top 2\% scores, and get a precision of $P=65.7\pm11.6\%$---much higher than the second-best method LOGAN 0, scoring $P=54.8\pm6.5\%$.}
    \label{fig:high-precision-celeba}
\end{figure*}

\section{DISTRIBUTION SHIFT $\cD_{ref}$ AND $\cD_{mem}$} \label{appx:distributional_shift}
There may exist a distributional shift between reference and training data. Because DOMIAS is primarily intended as a tool for data publishers to test their own synthetic data vulnerability, it is recommended that testing is conducted with a reference dataset from the same distribution (e.g. a hold-out set): this effectively tests the worst-case vulnerability. Hence, our work focused on the case where there is no shift.

Nonetheless, reference data may not always come from the same target distribution. For example, reference data may come from a different country, or synthetic data may be created by intentionally changing some part of the real data distribution, e.g. to include fairness guarantees \citep{xu2019achieving,vanBreugel2021DECAF:Networks}. Thus, let us assume there is a shift and that the reference data $\mathcal{D}_{ref}$ comes from $\tilde{p}_R$, a shifted version of $p_R$ (i.e. the distribution from which $\mathcal{D}_{mem}$ is drawn). We give a specific example and run an experiment to explore how this could affect DOMIAS attacking performance.

Let us assume there is a healthcare provider that publishes $\mathcal{D}_{syn}$, a synthetic dataset of patients suffering from diabetes, based on underlying data $\mathcal{D}_{mem}\sim p_R$. Let us assume there is an attacker that has their own data $\mathcal{D}_{ref}\sim \tilde{p}_R$, for which some samples have diabetes ($A=1$), but others do not ($A=0$). We assume that $A$ itself is latent and unobserved (s.t. the attacker cannot just train a classification model) and that there is a shift in the distribution of $A$ (i.e. with a slight abuse of notation $\tilde{p}_R(A=1)<1$). Diabetes is strongly correlated with other features $X$ in the data, additionally we assume the actual condition distribution $p_R(X|A)$ is fixed across datasets. This implies the reference and membership set distributions can be written respectively as:
\begin{align}
    \tilde{p}_R(X) &= \tilde{p}_R(A=1)p(X|A=1) + \tilde{p}_R(A=0)p(X|A=0) \\
    p_R(X) &= p(X|A=1)
\end{align}
Since $p_R(X|A=1)\neq p_R(X|A=0)$ and $\tilde{p}_R(A=1)\neq 1$, there is a distributional shift between $\tilde{p}_R$ and $p_R$.

Now let us see how different attackers perform in this setting as a function of the amount of shift. Evidently, since some of the baselines do not use reference data, some attackers will be unaffected, but we should expect DOMIAS performance to degrade. We take the Heart Failure dataset, which indeed has a feature denoting diabetes,. We vary the amount of shift of $\tilde{p}_R$ w.r.t. $p_R$, from $\tilde{p}(A=0)=0$ (no shift), to $\tilde{p}(A=0)=0.8$ (a large shift and the original Heart Failure non-diabetes prevalence). Let us assume test data follows the attacker's existing dataset, i.e. $\tilde{p}_R$. This gives Figure \ref{fig:distributional_shift}.

\begin{figure*}
    \centering
    \includegraphics[width=0.7\textwidth]{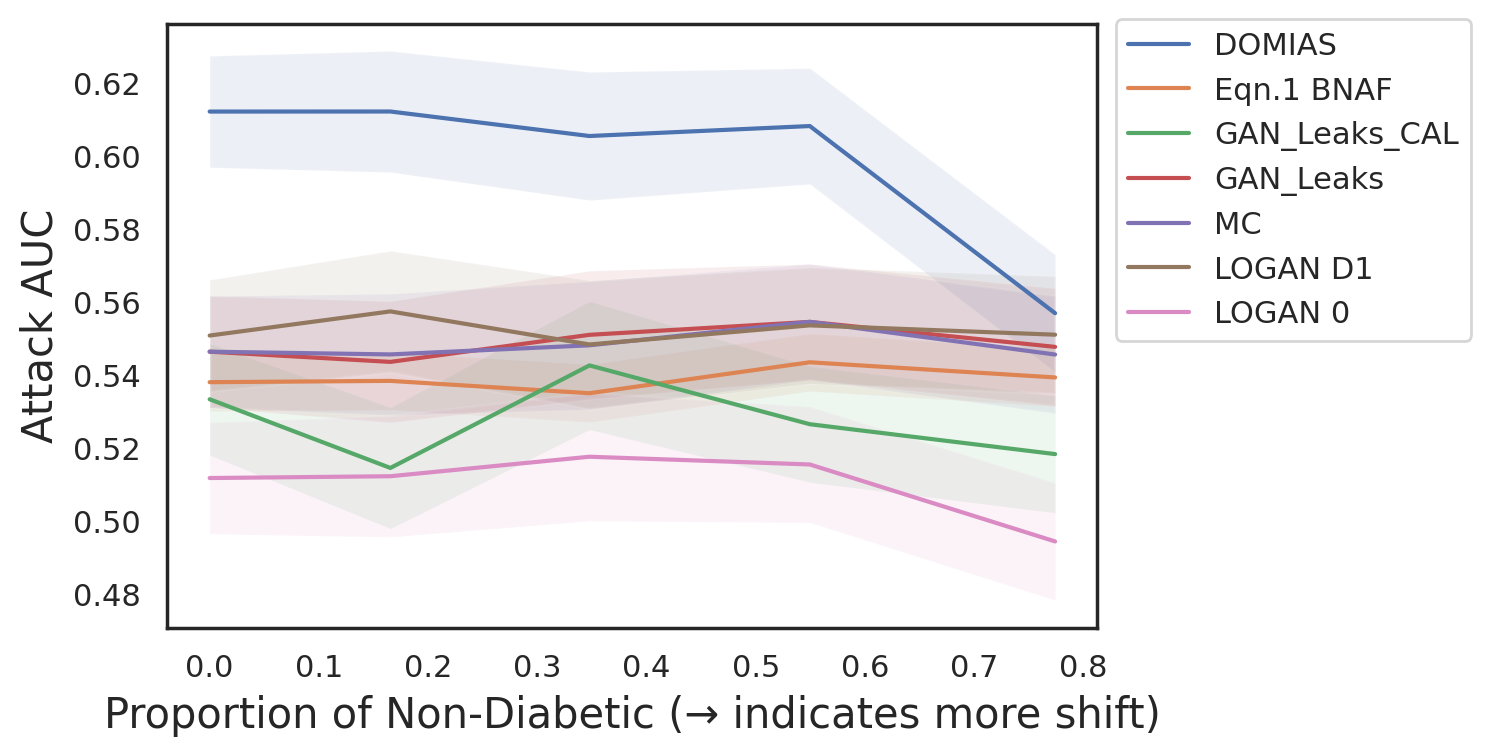}
    \caption{\textit{Effect of distributional shift on DOMIAS performance.} A distributional shift between $\mathcal{D}_{mem}$ and $\mathcal{D}_{ref}$ degrades attacking performance, but preliminary experiments show that for small to moderate shifts it is still preferable to use reference data even though it is slightly shifted.}
    \label{fig:distributional_shift}
\end{figure*}

We see performance of DOMIAS degrades with increasing shift, due to it approximating $p_R$ with $\tilde{p}_R$, affecting its scores (Eq. 2). However, we see that for low amounts of shift this degradation is minimal and we still perform beter than not using the reference dataset (baseline Eq. 1 (BNAF)). This aligns well with the results from 5.2, Figure 4, that showed that an inaccurate approximation of $p_R$ due to few samples is still preferable over not using any reference data.

\end{document}